\DeclareMathOperator*{\argmax}{arg\,max}
\DeclareMathOperator*{\argmin}{arg\,min}
\theoremstyle{definition}
\newtheorem{theorem}{Theorem}
\newtheorem{lemma}{Lemma}
\theoremstyle{remark}
\begin{document}

\title{Efficient Global Multi-object Tracking Under Minimum-cost Circulation Framework}

\author{Congchao~Wang,
        Yizhi~Wang,
        and~Guoqiang~Yu
\IEEEcompsocitemizethanks{
\IEEEcompsocthanksitem C. Wang, Y. Wang and G. Yu are with the Bradley Department
of Electrical and Computer Engineering, Virginia Tech, Arlington,
VA, 22203.\protect\\
E-mail: \{ccwang, yzwang, yug\}@vt.edu
}
}


\IEEEtitleabstractindextext{%
\begin{abstract}
We developed a minimum-cost circulation framework for solving the global data association problem, which plays a key role in the tracking-by-detection paradigm of multi-object tracking (MOT). The global data association problem was extensively studied under the minimum-cost flow framework, which is theoretically attractive as being flexible and globally solvable. However, the high computational burden has been a long-standing obstacle to its wide adoption in practice. While enjoying the same theoretical advantages and maintaining the same optimal solution as the minimum-cost flow framework, our new framework has a better theoretical complexity bound and leads to orders of practical efficiency improvement. This new framework is motivated by the observation that minimum-cost flow only partially models the data association problem and it must be accompanied by an additional and time-consuming searching scheme to determine the optimal object number. By employing a minimum-cost circulation framework, we eliminate the searching step and naturally integrate the number of objects into the optimization problem. By exploring the special property of the associated graph, that is, an overwhelming majority of the vertices are with unit capacity, we designed an implementation of the framework and proved it has the best theoretical computational complexity so far for the global data association problem. We evaluated our method with 40 experiments on five MOT benchmark datasets. Our method was always the most efficient in every single experiment and averagely 53 to 1,192 times faster than the three state-of-the-art methods. When our method served as a sub-module for global data association methods utilizing higher-order constraints, similar running time improvement was attained. We further illustrated through several case studies how the improved computational efficiency enables more sophisticated tracking models and yields better tracking accuracy. We made the source code publicly available on GitHub with both Python and MATLAB interfaces.

\end{abstract}

\begin{IEEEkeywords}
Minimum-cost circulation, Overwhelming unit-vertex-capacity graph, Data association, Object tracking.
\end{IEEEkeywords}}

\maketitle

\IEEEdisplaynontitleabstractindextext

%
\IEEEpeerreviewmaketitle

\IEEEraisesectionheading{\section{Introduction}\label{sec:introduction}}
\IEEEPARstart{M}{ulti-object} tracking (MOT) is a fundamental task in machine intelligence with a variety of applications such as traffic surveillance, autonomous driving, particle tracking, and cell lineage analysis \cite{motsurvey1,motsurvey2}. 
The continuing improvement of object detectors \cite{dpm,fasterRCNN, RRC_detector} inspires great interests in the tracking-by-detection paradigm, in which objects in individual frames are detected first and then in the data association step these detections are linked into trajectories to recover their identities. There are two major components of a good data association method: a strong affinity model for similarity measure between detections across frames, and a robust identity inference model based on the similarity scores \cite{motsurvey2,motsurvey3}. For the affinity models, both hand-crafted design \cite{AB3DMOT, bydpixels} and deep learning-based design \cite{LSST17, deep_network} have made great progress and achieved promising discriminative power \cite{trackers_mot16}. For the identity inference, most of recent state-of-the-art trackers were based on maximum a posteriori (MAP) estimation \cite{motsurvey2, LSST17,AB3DMOT}, which provides a principled approach to optimally integrate information from detection reliability, missingness of detections, and affinity between detections. In the past ten years, one celebrated progress on the identity inference is the formulation of the MAP problems over multi frames as a minimum-cost flow problem \cite{mcf_kssp,mcf4mot, followme, deep_network}. This minimum-cost flow formulation enables naturally the incorporation of information from multiple frames, and more importantly it makes the MAP problems globally solvable with polynomial time complexity. For the identity inference model, one critical observation is that usually, the more frames are taken into account, the more pieces of evidence can be collected, and the more preferable inference results can be achieved \cite{followme}. In online tracking scenarios, as many previous frames as possible should be utilized in extending existing trajectories to the current frame; in offline scenarios, the inference should be carried out on the whole movie.

However, in real applications, minimum-cost flow-based identity inference models \cite{mcf_kssp,mcf4mot, followme} are limited to datasets with a small number of detections or frames due to computational cost. Despite extensive efforts of acceleration \cite{ssp4mot,followme}, the computational efficiency is still far from satisfactory. Therefore, compared with the wide application of advanced affinity models, the majority of recent works on MOT still rely on simple and greedy data association strategies. For example, some researchers sequentially conduct the inference over a small batch of the video \cite{followme} or only two consecutive frames \cite{LSST17, bydpixels} in the online scenarios with, \textit{e.g.}, the Hungarian algorithm. Some use greedy dynamic programming approximation, searching for sub-optimal linking results in the offline scenarios \cite{ssp4mot}.
The practical difficulty in utilizing a sufficiently long history of detections limits the performance of the identity inference. The whole tracking-by-detection paradigm is thus compromised and cannot achieve its full potential \cite{ssp4mot,followme}, which is corroborated in our experiments.

\begin{table*}[]
\centering
\caption{Comparison of Computational Complexity Bounds between Our Method and Existing Works}
\begin{threeparttable}
\small
\begin{tabular}{llll}
\hline
Method & Year & Bound                                            & Reference \\ \hline
MCF & 2008 & $O(n^2m\text{log}(nC)\text{log}(n))$             & \cite{mcf4mot}       \\
KSP & 2011 & $O(n(n\text{log}(n)+m))$                         & \cite{mcf_kssp}       \\
SSP & 2011 & $O(n(n\text{log}(n)+m))$                        & \cite{ssp4mot}       \\
dSSP & 2015 & $O(n(n\text{log}(n)+m))$                        & \cite{followme}       \\
Block-CS & 2017 & $O(\text{min}\{n^{2/3}, m^{1/2}\}m\text{log}(nC)\text{log}(n))$             & \cite{unitCapMCC}       \\
CINDA & present & $O(n^{1/2}m\text{log}(nC))$                      &  This work     \\\hline
\end{tabular}
\begin{tablenotes}
      \footnotesize
      \item \textit{* $m, n, C$: graph properties which are proportional to the number of detections, the number of potential links between detections and the maximum absolute value of similarity scores.}
\end{tablenotes}
\end{threeparttable}
\label{table:polyalg4mcf}
\end{table*}

In this report, we propose to formulate the MAP problems under the minimum-cost circulation framework, instead of the widely used minimum-cost flow framework. We name our method CINDA, which stands for CIrculation Network-based Data Association. CINDA can directly solve the MAP problem, while methods using the minimum-cost flow alone cannot. 
This is because the minimum-cost flow framework, as used in \cite{mcf4mot, deep_network, mcf_kssp, followme, ssp4mot}, assumes the optimal amount of flow in the network is predetermined. 
Yet, the optimal flow amount is equal to the number of objects we are tracking, which itself is unknown beforehand under most circumstances. 
Therefore, a great deal of additional effort needs to be spent on searching for the optimal flow amount. Some methods use a binary search scheme \cite{mcf4mot, deep_network}, where, in each step, a minimum-cost flow problem has to be solved from scratch, because the number of objects assumed in this round of searching can be very different from the previous one. 
Some other methods such as \cite{mcf_kssp, ssp4mot}, and \cite{followme} use incremental strategy to search for the optimal object number. Each iteration of the searching increases the amount of flow by one and requires traversing the whole network for finding a shortest path. 
Therefore, when the number of object is large, the cost for the incremental searching is high and these methods still suffer from the efficiency problem of the minimum-cost flow framework. 
Importantly, we observe that these efforts of searching for optimal object number can be saved by reformulating the MAP problem as a minimum-cost circulation problem instead, where the object number does not need to be explicitly searched binarily or one by one and is actually a natural byproduct of the integrated optimization procedure.

Taking advantage of the recent progress in network flow algorithms and exploring the special properties of the associated graph, we designed a  minimum-cost circulation algorithm in CINDA, which achieves the best ever theoretical complexity for solving the global data association problem.
Currently, the best theoretical complexity in solving the data association problem in MOT came from the blocking flow-based cost-scaling algorithm \cite{unitCapMCC}. Recently, researchers found that the minimum-cost flow problem in MOT can be solved by this algorithm in $O(\text{min}\{n^{2/3}, m^{1/2}\}m\text{log}(nC))$.
Here $m$, $n$, and $C$ are graph properties that are proportional to the number of detections, the number of potential links between detections, and the maximum absolute value of similarity score, respectively (see section 4 for details).
Thus, the global data association can be solved in $O(\text{min}\{n^{2/3}, m^{1/2}\}m\text{log}(nC)\log(n))$ ('Block-CS' in Table \ref{table:polyalg4mcf}). It is possible to use the same blocking flow-based cost scaling algorithm to solve the minimum-cost circulation problem. Unfortunately, it is widely known that that algorithm is less efficient in real applications \cite{sp_scaling,goldberg1997cs2}. 
To achieve a better theoretical complexity while keeping the real world efficiency in mind, we start our algorithm development from the famous push/relabel-based cost-scaling algorithm cs2 \cite{goldberg1997cs2}, which is practically efficient but is proven to have a worse theoretical complexity bound of $O(nm\log(nC))$.
We found that its complexity bound is worse because the flow pushing steps are not designed to correct the flow imbalance across nodes. As a consequence, some iterations may make no progress on reducing the flow imbalances in the network at all. 
Inspired by the blocking flow-based cost-scaling algorithm \cite{unitCapMCC}, in CINDA, the flow pushing operations are instead guided by a blocking flow, which dramatically improves the complexity to $O(\text{min}\{n^{2/3}, m^{1/2}\}m\text{log}(nC))$.
Notice that we already obtained $\log(n)$ folds improvement comparing with the best bound of minimum-cost flow-based frameworks because we get rid of the binary search scheme for the optimal object number.
Moreover, we identified a special property of the graph that further influences the complexity of the algorithm: an overwhelming majority of vertices in the graphs formulated in our framework are with unit capacity. That is, almost all vertices in our graph have only one input arc or one output arc. Taking this special property into account, we proved that the worst case complexity bound of CINDA can be further improved to $O(n^{1/2}m\text{log}(nC))$.

Consistent with the best theoretical complexity bound, CINDA achieves the best empirical computational performance through extensive experiments on five MOT benchmarks with various object detectors and affinity models. Among all these 40 experiments, CINDA is always the fastest compared with minimum-cost flow-based methods. 
Averagely, our method is 29 times faster than the best competing method for each experiment. Compared with each individual method, CINDA is 1,192 and 592 times faster than the successive shortest path algorithm (SSP) and its improved version, dSSP \cite{followme}, respectively. CINDA is 53 times faster than MCF \cite{mcf4mot}, which is based on cs2 \cite{goldberg1997cs2}, the famous implementation of cost-scaling algorithm. In addition, we test the scenario that our circulation-based framework serves a sub-module for more sophisticated global data association methods with higher-order constraints \cite{chari2015pairwise}, and similar running time improvement is attained. 

Besides saving time directly, the efficiency preponderance brings two further benefits. First, it makes possible for the inference model to utilize many more history frames in real-time applications and readily use the whole movie for offline applications. 
Second, it naturally enables us to build iterative tracking frameworks to refine affinity scores and increase their discriminative power by designing more features using the tracking results from previous iterations. 
We used two cases to demonstrate the accuracy improvement brought by the two benefits respectively. The first case is the car tracking scenario. The results show that even in the presence of state-of-the-art object detection results and affinity design, using longer history improves the overall MOT performance. This shows the three key factors in the tracking-by-detection paradigm (detector, affinity model, and global inference model) should work together to obtain the best result. The current practice of using two frames for identity inference gives sub-optimal results. Thankfully, our fast algorithm enables the consideration of more frames in computationally constrained and real-time applications. The other case is the cell tracking scenario. As cells have only limited appearance features, the affinity score that purely based on the pairs of detections shows low discriminative power. With an iterative framework, more features, such as velocity information, can be estimated from previous iterations, and experiments show the iterative processing helps to improve the tracking accuracy significantly.

Our major contributions can be summarized as the following three points. (1) We proposed a new minimum-cost circulation-based framework for solving the MAP problem in MOT. (2) We developed a minimum-cost circulation algorithm and proved that it achieves the best ever theoretical bound for solving the MAP inference problem in MOT. (3) We implemented our proposed algorithm, which was shown to be empirically much more efficient than existing widely used methods in all of the 40 benchmark tests, often by several orders of improvement. It is worth noting that the coincidence of the superiority in both theory and practice does not happen frequently for algorithm development because theoretical complexity is often related to the worst case performance, while practical efficiency involves many other factors. Such a coincidence assures users good practical performance without worrying about the worst scenarios. The source code of our implementation is publicly available on GitHub (\url{https://github.com/yu-lab-vt/CINDA}). Both Python and MATLAB interfaces are provided.

The remainder of the paper is organized as follows. In section 2 we review related works of identity inference models in MOT. The general form of the MAP identity inference in MOT is given in section 3. In section 4 we present the core methodology of our minimum-cost circulation-based framework. We give the proof of its complexity bound in section 5. We evaluate the efficiency and demonstrate several applications of our method in section 6 and conclude the paper in section 7.

\section{Related works}
Given the detections of objects in each frame of the video, the goal of the multi-object tracking problem is to select and cluster the detections corresponding to the same object over time \cite{chari2015pairwise}.
The selection and clustering task is commonly referred to as the data association problem, which involves two major components \cite{motsurvey2}:
one is the design of the affinity models to measure the similarity between detections, and the other one is the inference of the identities of detections based on the affinity measurements.
Based on the discriminative features learned or designed from the appearance cues of detections, there has been promising progress on building robust affinity models \cite{mot16, LSST17, bydpixels}. 
As another key component, recent state-of-the-art trackers conduct their identity inference under the MAP principle, where the formulation of minimum-cost flow has enjoyed most of the popularity \cite{fusionHeadQuadratic, motsurvey1, motsurvey2}. 
However, the current efficiency of solving the minimum-cost flow problem in MOT is far from satisfactory. Therefore, in large datasets or time critical scenarios like online tracking, globally optimal solutions gave way to simple greedy ones \cite{bydpixels, onlinePds}, which arguably limited the overall tracking performance \cite{bydpixels, motsurvey2}. 
In the following, we will first give a review to the development of recent identity inference models used in MOT and then analyze in details those minimum-cost flow-based ones. Finally, we will discuss the development of the theoretical complexity bound for solving the minimum-cost flow problem in MOT.

\subsection{Identity inference models}
MOT identity inference strategies can be divided into two categories: probabilistic approaches and deterministic approaches \cite{followme, motsurvey2}. Early probabilistic works like Multi-Hypothesis Testing \cite{MHT_org} and Joint Probabilistic Data-Association Filters \cite{JPDA_org}, have high computational cost and scale badly with the number of detections. Afterwards, plenty of approximation methods are proposed to accelerate these methods (see \cite{motsurvey2} for a good survey).
In recent years, the majority of the works model the identity inference as a deterministic MAP problem, which consists of three major ways of modeling: conditional random field (CRF) \cite{NOMT_CRF,disc_conti_CRF}, quadratic programming \cite{fusionHeadQuadratic, lagrangian2}, and minimum-cost flow \cite{mcf4mot, mcf_kssp, everyNeedsSome, LSST17}. Among them, minimum-cost flow-based methods (including special cases like k-shortest path and bipartite matching) are the most popular one thanks to its flexibility and the guarantee of optimality \cite{motsurvey3, motsurvey2}. Many quadratic programming formulations \cite{fusionHeadQuadratic, lagrangian1, lagrangian2} also use minimum-cost flow algorithms as a sub-routine to get approximate solutions.
The CRF model, as a class of Markov random field models, has also been used extensively in MOT \cite{motsurvey3}. However, CRF suffers from its intractable inference and thus has no guarantee to global optimality \cite{CRF_approx_TIP, CRF_dpl_arxiv}. Deep learning plays an important role in learning the discriminative affinity scores between detections. Though recently its applications to the identity inference problem are also emerging \cite{motsurvey3, trackers_mot16,quadCNN}, they are beyond the scope of this study.

\subsection{Minimum-cost flow models}
The minimum-cost flow-based formulation in identity inference problems first appeared in \cite{mcf4mot} and \cite{mcf_kssp}.
This formulation is flexible for dealing with detection flaws such as false positives and false negatives by appropriately designing arcs in the networks. It can also easily integrate various affinity designs. For the original formulation in the seminal works, the affinity design was only based on distances between detections, while other discriminative features like those from object's appearance were ignored. These features are frequently used and play an important role in more recent works \cite{mcfAppearance, followme}. 
Besides, it is also possible to do end-to-end learning of cost functions for the arcs in the network without hand-crafted arc affinity design \cite{deep_network}.
Some researchers \cite{chari2015pairwise, everyNeedsSome, lagrangian1, lagrangian2} designed networks considering higher-order constrains like consistency of velocity or restrictions from interacting objects or object parts. These additional constraints usually make the problem NP-hard. Therefore, these methods use Frank-Wolfe algorithm, Lagrangian relaxation, or other iterative solvers to get an approximate solution. Minimum-cost flow is commonly used to serve as a sub-routine in those iterative approximation methods.
Comprehensive surveys can be found in \cite{motsurvey1, motsurvey2, motsurvey3}. Notice that the same approach is called k-shortest paths in some papers.

Though there are many existing solvers for generic minimum-cost flow problem, directly applying them to identity inference models in MOT yields sub-optimal efficiency.
\cite{mcf_kssp} and \cite{ssp4mot} proposed that successive shortest path (SSP) algorithm has a low complexity bound for solving the minimum-cost flow problem in MOT when the number of targets is small. \cite{followme} showed that using the dynamic shortest path tree, SSP can be further accelerated, though the acceleration is limited. In practice, even though these solvers are specifically selected or designed for MOT problem, none of them is satisfactory. Hence in many real applications people tended to conduct inference on smaller batch of frames, especially in some online tasks.
For example, in \cite{followme}, the authors proposed to apply minimum-cost flow on a small time window of the video each time. For many recent online trackers \cite{bydpixels, LSST17, JBNOT, AB3DMOT}, they only considered optimizing linkages between two adjacent frames. For these methods, the minimum-cost flow problem is equivalent to a bipartite matching problem, which can also be solved by the Hungarian algorithm. These local optimal solutions usually cannot correct errors occurred in earlier frames when detections in later frame become available. As a result, they have accuracy loss compared with global approaches \cite{ssp4mot, followme, bydpixels}. This kind of accuracy loss has been recorded and efforts were tried to recover it through iterative or post processing. For instance, to deal with this problem, \cite{LSST17} used an iterative framework to correct errors from current greedy linking results, while \cite{AB3DMOT} added depth information and used 3D Kalman filter to reduce the identity ambiguity.

\subsection{Theoretical complexity of solving minimum-cost flow in MOT}
After the introduction of minimum-cost flow framework to the MOT problem, plenty of works  discussed the theoretical complexity for solving it \cite{mcf4mot, mcf_kssp, ssp4mot}. The representative results are summarized in Table \ref{table:polyalg4mcf}.
The cost-scaling algorithm cs2 \cite{goldberg1997cs2}, which is widely viewed as the best solver for generic minimum-cost flow problem, was utilized in the seminal work MCF \cite{mcf4mot}, whose worst-case complexity was shown to be $O(n^2m\text{log}(nC)\text{log}(n))$. Here $O(n^2m\text{log}(nC))$ is the complexity of solving a generic minimum-cost flow problem and $\text{log}(n)$ represents the iterations needed for finding the best flow amount.
Another type of algorithms, including the k-shortest path algorithm (KSP) \cite{mcf_kssp} or the successive shortest path (SSP) \cite{ssp4mot} algorithm, solve the corresponding minimum-cost flow problem by incrementally approaching to the optimal flow number.
Notice that the flow networks formulated in MOT problems are unit-capacity graphs, because it is assumed that each detection can participate at most one trajectory. Taking advantage of this fact, the step that increases the flow number, which is equivalent to instantiate a new trajectory, has lower computational cost. Therefore, SSP and KSP enjoy a complexity of $O(n(n\text{log}(n)+m))$, which is lower than the complexity of cost-scaling algorithms.
If we assume the number of targets is $K$, the complexity can be written as $O(K(n\text{log}(n)+m))$. Besides, \cite{followme} accelerated existing SSP algorithm by leveraging the idea of dynamic shortest path tree (dSSP in Table \ref{table:polyalg4mcf}). Recently, researchers in the graph theory community found that the complexity of some previously proposed cost-scaling algorithms are actually lower when they are applied on unit-capacity graphs \cite{unitCapMCF,unitCapMCC}. Thus, applying the new theory, the complexity can be further decreased to $O(\text{min}\{n^{2/3},m^{1/2}\}m\text{log}(nC)\text{log}(n))$ using blocking flow-based cost-scaling algorithm to solve the minimum-cost flow problem in MOT . Note that to the best of our knowledge, this complexity has never been reported in MOT literature.

In this paper, we carefully analyzed the MAP problem and proposed to formulate it into a minimum-cost circulation framework rather than a minimum-cost flow framework. This new formulation significantly improves the theoretical worst case complexity bound of solving the MAP problem and brings orders of efficiency boosting in the experiments. In the following sections, the term 'object' represents a physical object existing over time (\textit{e.g.} a person), and 'detection' indicates a detected snapshot of an object at a given time point. An object corresponds to a series of detections (trajectory) with the same identity.

\begin{figure*}[t]
\centering
\includegraphics[width=1\linewidth]{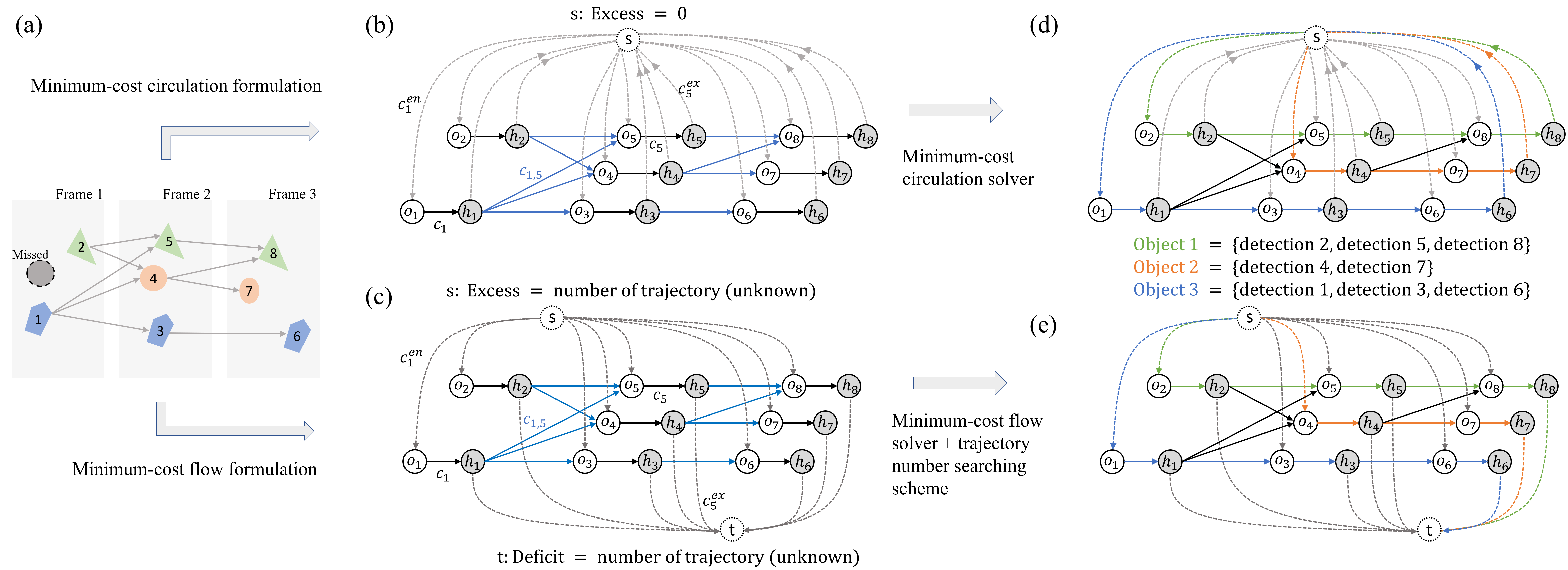}
\caption{
(a) Objects detected in three consecutive frames. The first frame contains two detections; one missed detection is colored in gray. Lines between detections are the possible ways of linking them. Each line is associated with a cost. 
If the similarity between two detections is too low to be the same object, we do not link them. There are three trajectories in these three frames. For example, detections 1, 3 and 6 should be linked together as a single trajectory.
(b) The proposed minimum-cost circulation formulation for MOT problem. Detection $x_i$ is represented by a pair of nodes: a pre-node $o_i$ and a post-node $h_i$. The dummy node $s$ is linked to all pre-nodes. 
Then all post-nodes are linked back to $s$. These edges are shown in dashed lines. Transition edges between detections are shown in blue. Similar to (a), there is no transition edge if the similarity between detections is too low. The input and output flows of every node in the circulation network is balanced. Therefore, the excess for the dummy node is always 0. 
(c) Typical minimum-cost flow formulation for MOT problem. Each detection is also represented by a pre-node and post-node. The difference is that in this flow network, there are two dummy nodes: the dummy source node $s$ is linked to all pre-nodes $\{o_i\}$  and the dummy sink node $t$ is linked to the post-nodes $\{h_i\}$. The color coding is the same as that in (b). The input and output flows of these two dummy nodes are both imbalanced and the amount of imbalances decide how many targets we want to track (the amount of flow that can happen in the network). To apply existing minimum-cost flow solvers, we need to specify the excess and deficit of $s$ and $t$ first, which is commonly unknown. 
(d) The results from the proposed minimum-cost circulation framework. Three trajectories are created and they are shown with the same color as in (a). The solution can be obtained by any minimum-cost circulation solver. 
(e) The results from the minimum-cost flow formulation. The same three trajectories are generated.
In addition to a minimum-cost flow solver, an accompanying searching scheme is needed to find the optimal trajectory number, or equivalently, the optimal flow amount.
}
\label{fig:mcf}
\end{figure*}

\section{Problem statement}
The problem of identity inference can be formulated as an MAP problem. Here we discuss the most widely used formulations that consider only unary and pairwise relationships \cite{mcf4mot, mcf_kssp, followme}. This form is also frequently employed as the sub-routine in quadratic formulations where higher order constraints are considered \cite{fusionHeadQuadratic, everyNeedsSome, lagrangian1,lagrangian2}.

Let $\mathcal{X} = \{x_i\}$ be a set of detections, where $x_i$ is a vector containing the position, appearance, and time index of detection $i$. A trajectory $T_k = \{x_{k_1},\ldots, x_{k_n}\}$, $x_{k_i} \in \mathcal{X}$ is a set of temporally ordered detections and 
\begin{align}
T_k \cap T_l = \emptyset, \forall k \neq l.
\end{align}
An association hypothesis is a set of non-overlap trajectories $\mathcal{T}=\{T_k\}$. The purpose of identity inference is to find the hypothesis $\mathcal{T}$ with the highest posterior probability \cite{mcf4mot}:
\begin{align}
\mathcal{T}^* &= \argmax_\mathcal{T}P(\mathcal{T}|\mathcal{X})\\
&=\argmax_\mathcal{T}P(\mathcal{X}|\mathcal{T})P(\mathcal{T})\\
&=\argmax_\mathcal{T} \prod_i P(x_i|\mathcal{T})\prod_{k:T_k\in \mathcal{T}} P(T_k).
\end{align}
Assume $P(x_i|\mathcal{T})$ follows a Bernoulli distribution. The corresponding parameter $\beta_i$ indicates the probability that detection $x_i$ is false positive and thus should be excluded in a trajectory in $\mathcal{T}$:
\begin{align}
P(x_i|\mathcal{T})=\left\{
\begin{array}{ll}
1-\beta_i,  \hspace{3mm} x_i\in T_k, T_k\in \mathcal{T} \\
\beta_i,\hspace{9mm}  otherwise.
\end{array}
\right.
\end{align}
Since we only consider unary and pairwise relationships between detections, a trajectory $T_k=\{x_{k_1},\ldots, x_{k_n}\}$ can be modeled as a Markov chain whose probability is
\begin{align}
P(T_k) = P_{enter}(x_{k_1})\prod^{n-1}_{i=1} P(x_{k_{i+1}}|x_{k_i})P_{exit}(x_{k_n}).
\end{align}
$P_{enter}(x_{k_1})$ is the probability that $x_{k_1}$ is the initial point of trajectory $T_k$. Similarly, $P_{exit}(x_{k_n})$ is the probability that $x_{k_n}$ is the terminate point of trajectory $T_k$. By taking the negative logarithm of all the probabilities in Eq.(4), the MAP problem can be converted to an integer linear programming (ILP) problem \cite{mcf4mot}:
\begin{align}
          f^* = \argmin_f\  &\sum_i C_i f_i + \sum_i C^{en}_i f^{en}_i  + \sum_{i,j}C_{i,j}f_{i,j} \nonumber \\
          + \sum_i C^{ex}_i f^{ex}_i \\ 
\text{s.t.} \ \ \ &f_i, f^{en}_i, f^{ex}_i, f_{i,j} \in{\{0,1}\} \\
\text{and} \ \ \ &f^{en}_i + \sum_j f_{j,i} =f_i= f^{ex}_i + \sum_j f_{i,j}
\end{align}
with
\begin{align}
&C^{en}_i = -\log{P_{enter}(x_{i})}, \hspace{5mm} C^{ex}_i = -\log{P_{exit}(x_{i})} \\
&C_{i,j} = -\log{P(x_{i}|x_{j})},  \hspace{7mm}C_i=\log{\frac{\beta_i}{1-\beta_i}}. %
\end{align}
Constraints in Eq.(8-9) indicate that each detection can participate in at most one trajectory and there is no splitting or merging of any trajectory.
More specifically, as $f_i \in \{0,1\}$, $P(x_i|\mathcal{T})$ can be rewritten as $(1-\beta_i)^{f_i}\beta_i^{(1-f_i)}$. $f_i = 1$ indicates that detection $x_i$ is included in a trajectory of $\mathcal{T}$ and $f_i = 0$ otherwise. 
Based on the other constraints in Eq.(8-9), we can see $f_i=1$ forces detection $x_i$ to be incident with at most one previous detection and at most one following detection in a trajectory, so $x_i$ will participate one and only one trajectory. 
Under such condition, $f^{en}_{i}=1$ or $f^{ex}_i=1$ indicates that $x_i$ is the initial or terminate point of a trajectory in $\mathcal{T}$. $f_{i,j}=1$ means that detection $x_i$ is followed by  detection $x_j$ in the same trajectory in $\mathcal{T}$. 
$f_i=0$ automatically rules out the detection $x_i$ from participating in any trajectory, where $f^{en}_i, f_{*,i}, f^{ex}_i$, and $f_{i,*}$ will all be zeros. 

\section{Minimum-cost circulation framework \label{method:mcc}}
We propose to map the ILP problem in Eq.(7-9) into a minimum-cost circulation problem for efficient inference. In this section, we will detail how the mapping is performed and prove their equivalence. For simplicity, our network will be named \textit{circulation network} (\textit{e.g.}, Fig. \ref{fig:mcf}(b)) while any minimum-cost flow-based network (\textit{e.g.}, Fig. \ref{fig:mcf}(c)) used in previous works will be called \textit{flow network}. The whole pipeline of our framework can be found in Fig. \ref{fig:flowchart} and more details are given in Alg. \ref{alg:cs2}.

We denote a circulation network as $G(V,E)$ (\textit{e.g.}, Fig. \ref{fig:mcf}(b)), whose node set is $V$ and arc set is $E$. The number of nodes is $n$ and the number of arcs is $m$. Each arc $(v,w)\in E$ is associated with unit capacity $u(v,w)$ and a real-valued cost $c(v,w)$. The graph has a dummy node $s$. For each detection $x_i\in \mathcal{X}$, two nodes, a pre-node $o_i$ and a post-node $h_i$, and three arcs, $(o_i,h_i)$, $(s,o_i)$ and $(h_i,s)$ are created. The corresponding costs are $C_i$, $C_i^{en}$, and $C_i^{ex}$, respectively. A transition arc is then created for any allowed spatiotemporal transition of an object. For example, if it is allowed for detection $x_i$ to transit to detection $x_j$ (assume $x_i$ is detected before $x_j$ in time), an arc $(h_i,o_j)$ is created with the cost of $C_{i,j}$. An example is given in Fig.\ref{fig:mcf}(b), which is constructed based on the detections depicted in Fig.\ref{fig:mcf}(a). The capacity of any arc is set to one.

Now we prove the solution of minimum-cost circulation formulation is the same as the MAP problem in Eq.(4). Since minimum-cost flow formulation was shown to have the same optimal solution as the MAP problem \cite{mcf4mot}, if we can prove the equivalence between our formulation and minimum-cost flow formulation, we can thus claim that our formulation also solves the MAP problem. We first prove a lemma that will help us later to link the solution of  minimum-cost circulation to minimum-cost flow.

\begin{lemma}\label{lemma:cycle}
The dummy node $s$ is included in every directed cycle in the circulation network.
\end{lemma}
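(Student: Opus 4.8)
The plan is to argue by contradiction: suppose there is a directed cycle $\gamma$ in the circulation network that avoids $s$, and show that the temporal structure of the arcs forbids this. First I would catalogue the arcs of $G(V,E)$. By construction every arc is of exactly one of four kinds: an entry arc $(s,o_i)$, an exit arc $(h_i,s)$, a detection arc $(o_i,h_i)$, or a transition arc $(h_i,o_j)$, where in the last case $x_i$ is detected strictly earlier in time than $x_j$. The entry and exit arcs are precisely the arcs incident to $s$, so a cycle $\gamma$ that does not pass through $s$ can only use detection arcs and transition arcs. I would also record the two local facts used below: a pre-node $o_i$ has $(o_i,h_i)$ as its only out-arc, and a post-node $h_i$ is never the head of a detection arc (its only in-arc is $(o_i,h_i)$, and its non-$s$ out-arcs are all transition arcs).

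Next I would introduce a time potential $\phi$ on $V\setminus\{s\}$ defined by $\phi(o_i)=\phi(h_i)=t(x_i)$, the frame index of detection $x_i$. Along a detection arc $\phi$ is unchanged, and along any transition arc $(h_i,o_j)$ it strictly increases because $x_i$ precedes $x_j$. Since $\gamma$ is a closed directed walk, the total change of $\phi$ around $\gamma$ is $0$; as no arc of $\gamma$ decreases $\phi$, this forces $\gamma$ to use no transition arc at all. But then $\gamma$ uses only detection arcs, which is impossible: $\gamma$ must then visit some post-node $h_i$ (the head of the detection arc it uses), and $h_i$ is not the tail of any detection arc, so $\gamma$ cannot continue. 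Hence no such cycle $\gamma$ exists, i.e., $s$ lies on every directed cycle of the circulation network.

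I expect the only point requiring care is the strictness of the time increase on transition arcs — the potential argument collapses if transitions were allowed within the same frame and $\phi$ merely failed to decrease. This strictness, however, is exactly what the construction guarantees, since $(h_i,o_j)$ is created only when $x_i$ is detected before $x_j$. The remainder is a short, purely structural case analysis over the four arc types, so I would keep the write-up brief.
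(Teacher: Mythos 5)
Your proof is correct and takes essentially the same route as the paper's: both rule out a cycle avoiding $s$ by exploiting that transition arcs $(h_i,o_j)$ strictly increase the frame index (the paper phrases this as a chain of ``detected earlier than'' relations showing $G\setminus s$ is a DAG, you phrase it as a potential argument, and your explicit treatment of a detection-arc-only cycle is a small completeness bonus over the paper's ``without loss of generality'' cycle form). One cosmetic slip: in your cataloguing paragraph you say a post-node $h_i$ is ``never the head of a detection arc'' when you mean ``never the tail'' (it is of course the head of $(o_i,h_i)$); the fact you actually invoke later is stated correctly, so the argument stands.
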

\begin{proof} 
Assume the circulation network is $G$. 
Let $G'=G\backslash s$ be the sub-graph of $G$, which is obtained from $G$ by pruning node $s$ and all arcs that are incident with $s$. We will prove that $G'$ is a directed acyclic graph (DAG). Since a DAG does not contain any cycle, every cycle will have to include the node $s$ that was removed.

It is clear that nodes in $G'$ come in pairs: each pre-node is accompanied by a post-node. Each pre-node has only one out-going arc whose head is a post-node and each post-node has only one in-coming arc whose tail is a pre-node. 
Post-nodes can only link to pre-nodes, which we call the linkages transition arcs. Assume there is a cycle in $G'$, and without loss of generality we denote it as $\{((o_1,h_1), (h_1,o_2), \cdots, (o_n,h_n), (h_n,o_1))\}$. 
Based on the construction of the graph, each transition arc $(h_i,o_j)$ corresponds to a pair of detections $x_i$ and $x_j$, and $x_i$ is detected strictly earlier than $x_j$. Since $h_n$ links to $o_1$, $x_n$ should happen earlier than $x_1$. By repeatedly applying the same reasoning for detections $\{x_1,\dots,x_n\}$, we know that $x_1$ should happen earlier than $x_n$ and this contradicts to our previous statement that $x_n$ should happen earlier than $x_1$. Therefore, $G'$ is DAG and any cycle in $G$ goes through $s$.
\end{proof}

Under such construction, we can interpret each cycle in $G$ as an object trajectory candidate, linking a sequence of detections, starting from $s$ and ending at $s$ as well.
Since we define the cost function $c(v,w)$ as $c(o_i,h_i)=C_i$, $c(s,o_i)=C^{en}_i$, $c(h_i,s)=C^{ex}_i$, and $c(o_i,h_j)=C_{i,j}$, the MAP problem in Eq.(7-9) could be solved by selecting a set of object trajectories from all the candidates so that they lead to minimum total cost in the graph. Algorithmically, This is achieved by sending circular flows in $G$. When the algorithm terminates, the cycles with flow inside them are selected as the output trajectories.
As in-out balance is required at any node $v\in V$ (conservation constraints in Eq.(9)), we have:
\begin{lemma}\label{lemma:k_flow}
A circulation with total integer flow amount $K$ can only be sent through $K$ distinctive cycles.
\end{lemma}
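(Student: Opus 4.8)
The plan is to invoke the classical flow-decomposition theorem and then exploit the unit-capacity structure together with Lemma~\ref{lemma:cycle}. First I would recall that any integral circulation $f$ on $G$ decomposes into a family of \emph{simple} directed cycles $\gamma_1,\dots,\gamma_p$, each carrying a strictly positive integral amount of flow, such that for every arc $(v,w)$ the value $f(v,w)$ equals the total amount carried by the cycles traversing $(v,w)$. This is the standard construction obtained by repeatedly peeling a simple cycle off the support of the current residual circulation; since $f$ has no source or sink, no path components appear and only cycles remain. Here ``total integer flow amount $K$'' is understood as the amount of circular flow crossing the dummy node $s$, i.e. $K=\sum_i f^{ex}_i=\sum_i f^{en}_i$, the two sums being equal by conservation at $s$.

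Next I would use that every arc of $G$ has unit capacity. Since $0\le f(v,w)\le u(v,w)=1$ for every arc, each arc is traversed by \emph{at most one} cycle of the decomposition, and any cycle traversing it carries exactly one unit. Hence the decomposition consists of pairwise arc-disjoint simple cycles, each of unit flow; in particular the $\gamma_j$ are genuinely distinct (two equal cycles would share every arc). It then remains to show $p=K$.

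This is where Lemma~\ref{lemma:cycle} enters: every $\gamma_j$ contains $s$, and being a simple cycle it passes through $s$ exactly once, entering via a unique arc of the form $(h_i,s)$ and leaving via a unique arc $(s,o_{i'})$. Because the cycles are arc-disjoint, the ``entering'' arcs $(h_i,s)$ used by the different $\gamma_j$ are pairwise distinct, and each such arc carries exactly one unit of flow; conversely every arc $(h_i,s)$ with $f(h_i,s)=1$ lies on some cycle of the decomposition, hence on exactly one. Therefore $p$ equals the number of arcs $(h_i,s)$ carrying a unit of flow, which is $\sum_i f^{ex}_i=K$. This gives exactly $K$ distinct cycles, and moreover each of them may be read as an object-trajectory candidate through $s$, matching the interpretation that the flow amount equals the number of tracked objects.

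The main obstacle I anticipate is not depth but care with the setup: one must fix the precise meaning of the ``total flow amount'' (the throughput at $s$, which is well defined precisely because Lemma~\ref{lemma:cycle} forbids cycles avoiding $s$), and one must insist that the decomposition cycles are simple so that ``passes through $s$'' literally means ``uses one in-arc and one out-arc of $s$.'' Once the unit-capacity constraint is used to force an arc-disjoint, unit-weight decomposition, the counting argument against the arcs incident to $s$ is routine.
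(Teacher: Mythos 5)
Your proof is correct and follows essentially the same route as the paper: unit arc capacities force the cycle decomposition of the circulation to consist of arc-disjoint cycles each carrying one unit, and the total amount $K$ then counts exactly $K$ such cycles. You are in fact more careful than the paper's two-line argument, since you make explicit the flow-decomposition theorem and the counting of cycles against the arcs incident to $s$ via Lemma~\ref{lemma:cycle}, details the paper leaves implicit.
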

\begin{proof}
Because the capacity of each arc is one, by sending a flow with amount one along a cycle, all arcs of the cycle are saturated. Thus a circulation with flow amount $K$ saturates $K$ cycles with no shared arcs.
\end{proof}

Lemma \ref{lemma:k_flow} shows that the minimum-cost circulation formulation could detect non-overlapping object trajectories, as required in the MAP formulation.

The key difference between our circulation network and previously widely used flow network (Fig. \ref{fig:mcf}(c)) is that circulation network does not have excess or deficit nodes (imbalanced nodes). In the flow network, the entrance and exit probabilities are encoded using an excess node $s$ and an deficit node $t$ as shown in Fig. \ref{fig:mcf}(c). A value of imbalance needs to be specified for each of them before applying the solver. This imbalance is the expected trajectory number $K$, or flow amount equivalently, which is unknown and additional efforts are needed to optimize it. In the circulation network, this is avoided because all nodes are balanced. By pursuing the minimum cost in the circulation network, the optimal number of object trajectories will be simultaneously selected.

Now we prove the equivalence of our minimum-cost circulation problem and the MAP problem.

\begin{theorem}
The MAP problem in Eq.(7-9) is equivalent to a minimum-cost circulation problem.
\end{theorem}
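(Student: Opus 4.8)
The plan is to prove the theorem by exhibiting an explicit cost-preserving bijection between the feasible integer points of the ILP in Eq.(7-9) and the integer circulations of the network $G(V,E)$ constructed above, and then appealing to the integrality of minimum-cost circulation on a network with integral (here, unit) capacities to transfer optimality across the bijection. The upshot will be that the ILP and the minimum-cost circulation on $G$ have the same optimal value, attained at corresponding solutions.

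First I would fix the variable--arc dictionary imposed by the construction: identify $f_i$ with the flow on $(o_i,h_i)$, $f^{en}_i$ with the flow on $(s,o_i)$, $f^{ex}_i$ with the flow on $(h_i,s)$, and $f_{i,j}$ with the flow on the transition arc $(h_i,o_j)$. Because these four families exhaust the arc set $E$, this is a bijection between ILP variable assignments and arc-value vectors on $G$. I would then verify that the constraint sets coincide. Flow conservation at $o_i$ (sole out-arc $(o_i,h_i)$; in-arcs $(s,o_i)$ and the transition arcs $(h_j,o_i)$) is $f^{en}_i+\sum_j f_{j,i}=f_i$, and conservation at $h_i$ (sole in-arc $(o_i,h_i)$; out-arcs $(h_i,s)$ and the $(h_i,o_j)$) is $f_i=f^{ex}_i+\sum_j f_{i,j}$; together these are exactly Eq.(9). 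Conservation at the dummy node $s$ reads $\sum_i f^{ex}_i=\sum_i f^{en}_i$, which I would stress is \emph{not} an extra restriction: summing Eq.(9) over $i$ and relabelling $\sum_{i,j}f_{j,i}=\sum_{i,j}f_{i,j}$ yields it automatically. The unit capacities $u\equiv 1$ plus nonnegativity confine each arc value to $[0,1]$, which for integer circulations is precisely the Boolean constraint of Eq.(8), and conversely any $\{0,1\}$ vector obeying Eq.(9) is a feasible integer circulation. Finally, since $c(o_i,h_i)=C_i$, $c(s,o_i)=C^{en}_i$, $c(h_i,s)=C^{ex}_i$, and $c(h_i,o_j)=C_{i,j}$, the circulation cost $\sum_{(v,w)\in E}c(v,w)\cdot(\text{flow on }(v,w))$ is term for term the objective of Eq.(7). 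Hence the bijection maps feasible points to feasible points and preserves cost, so the ILP optimum equals the least cost over \emph{integer} circulations of $G$.

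It then remains to reconcile this with the usual meaning of ``minimum-cost circulation problem'', which is a linear program over real-valued circulations. I would close the gap by noting that the node-arc incidence matrix of $G$ is totally unimodular and every capacity is integral, so the circulation polytope $\{x:Ax=0,\ 0\le x\le u\}$ is integral and the LP attains its minimum at an integer circulation; boundedness and feasibility are immediate since all capacities are finite and the zero circulation is feasible, ruling out unboundedness or infeasibility. Combining the two steps, the real minimum-cost circulation on $G$ has the same optimal value as the ILP in Eq.(7-9), i.e. as the MAP problem. One may then overlay the trajectory picture: flow-decomposing an optimal integer circulation into directed cycles, each of which threads $s$ by Lemma \ref{lemma:cycle} and therefore reads off as a temporally consistent trajectory, of which there are exactly $K$, pairwise arc-disjoint by Lemma \ref{lemma:k_flow} (with $K$ the total flow amount), recovers the optimal hypothesis $\mathcal{T}^*$ with the object count $K$ produced as a byproduct rather than searched for.

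I expect the only subtle step to be this passage from integer to real circulations --- confirming that relaxing integrality is harmless, via total unimodularity together with finiteness of the capacities; everything else is bookkeeping that follows directly from the definitions of the arcs, costs and capacities of $G$. A second point I would be careful to articulate is that conservation at $s$ is a consequence of Eq.(9) rather than an additional constraint, so that passing to the circulation network neither strengthens nor weakens the feasible region relative to the ILP.
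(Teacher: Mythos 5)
Your proof is correct, but it takes a different route from the paper's. The paper does not argue directly from the ILP: it invokes the previously established result of \cite{mcf4mot} that Eq.(7-9) is equivalent to finding a minimum-cost set of disjoint $s$-$t$ paths on the flow network $G_f$, and then uses Lemma \ref{lemma:cycle} and Lemma \ref{lemma:k_flow} to set up a cost-preserving correspondence between cycles through the dummy node in the circulation network $G_c$ and $s$-$t$ paths in $G_f$, so that a feasible circulation of amount $K$ on $G_c$ corresponds to a feasible $K$-path flow on $G_f$ and vice versa. You instead match the ILP to the circulation problem head-on: identifying each variable family with an arc family, checking that conservation at $o_i$ and $h_i$ reproduces Eq.(9) exactly (with conservation at $s$ an automatic consequence, a point the paper never spells out), that unit capacities plus integrality reproduce Eq.(8), and that costs agree term by term; you then close the integer-versus-real gap via total unimodularity of the node-arc incidence matrix together with boundedness and feasibility of the circulation polytope. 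Your argument buys self-containedness --- it does not lean on the correctness of the reduction in \cite{mcf4mot} --- and it makes explicit the integrality step that the paper leaves implicit (the paper's correspondence is really between \emph{integral} circulations and flows, and the fact that the LP-style minimum-cost circulation attains its optimum at an integral point is assumed rather than stated). The paper's route is shorter given the cited prior result, and it has the expository advantage of exhibiting exactly how the circulation network relates to the familiar flow network, which supports the paper's central claim that the circulation formulation absorbs the search over the unknown trajectory number $K$. Your use of Lemmas \ref{lemma:cycle} and \ref{lemma:k_flow} only at the end, to read trajectories off a cycle decomposition, is a legitimate reorganization rather than a gap.
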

\begin{proof}
Assume the circulation network is $G_c$ and the corresponding flow network is $G_f$ (see Fig.\ref{fig:mcf}(b-c)). Based on Lemma \ref{lemma:cycle}, for each cycle in $G_c$, by separating its dummy node, it can be translated to a unique $s$-$t$ path in $G_f$. By the same token, each $s$-$t$ path in $G_f$ can be translated to a unique cycle in $G_c$. Then with Lemma \ref{lemma:k_flow}, any feasible circulation with amount $K$ on $G_c$ corresponds to a feasible flow on $G_f$ with $K$ distinctive $s$-$t$ paths, and vice versa.
As is shown in \cite{mcf4mot}, the MAP problem in Eq.(7-9) is equivalent to finding a set of distinctive $s$-$t$ paths with minimum total cost on $G_f$, which is thus equivalent to finding the minimum-cost circulation on $G_c$.
\end{proof}

If the costs for all arcs in the circulation network are non-negative, the optimal solution is trivial: no cycle is detected, the value of the flow is zero, and the minimum cost is also zero. Because we have negative-cost arcs (Eq.(10-11)), some cycles in the network could have negative cost and the total cost can be negative.

It is worthy to mention that all the minimum-cost flow problems we encountered in MOT applications can be replaced by our minimum-cost circulation formulation, which means that all the existing minimum-cost flow-based identity inference models in MOT can be accelerated with the proposed formulation.
\begin{figure}[t]
\centering
\includegraphics[width=1\linewidth]{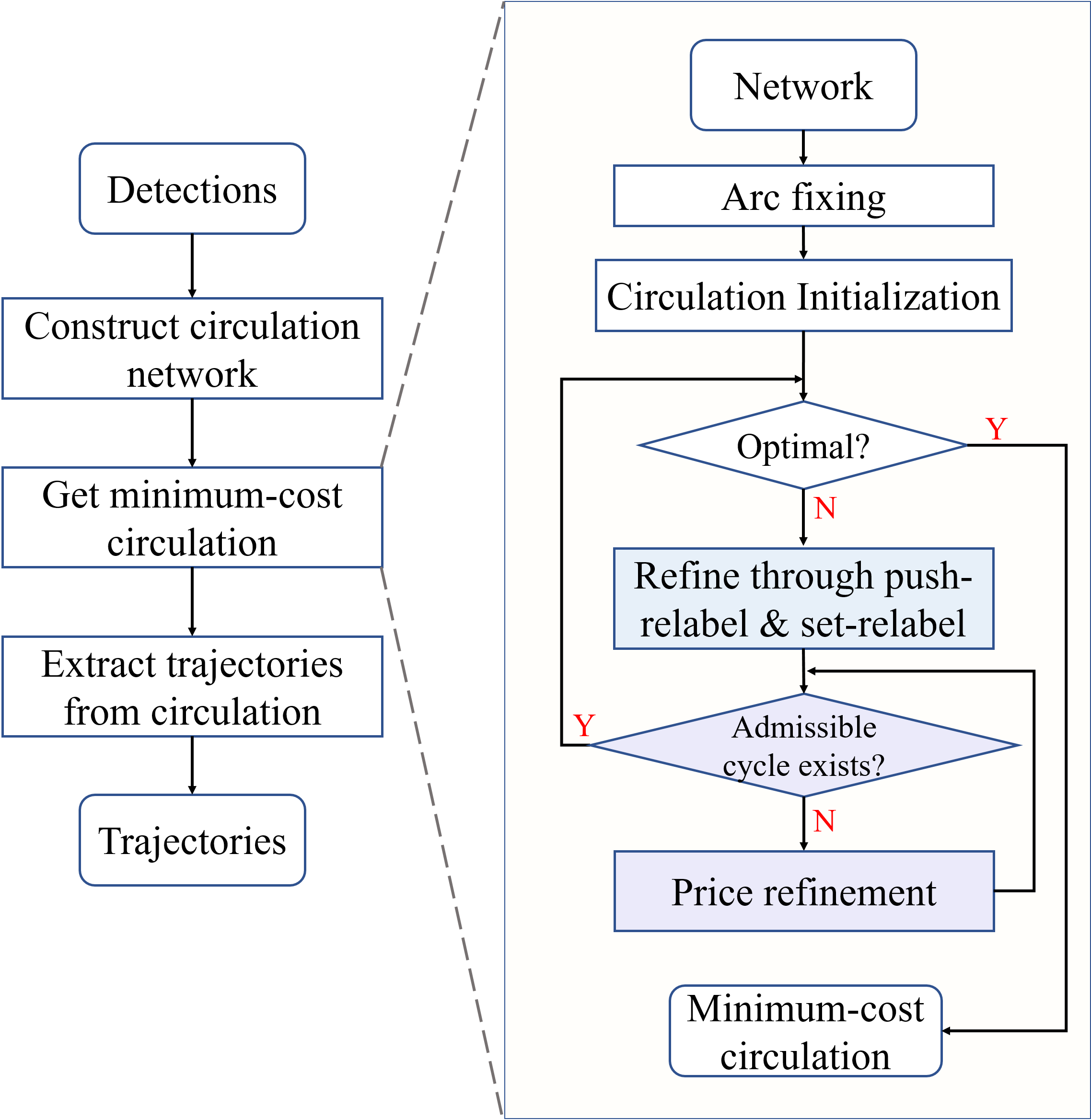}
\caption{
Flow chart of CINDA. The two modules shown with shaded colors take over the major time consumption of the whole framework. The pseudo code of these two modules can be found in the Alg. \ref{alg:cs2} (line 13-29).
}
\label{fig:flowchart}
\end{figure}

\section{Algorithm and worst case complexity}
The design of our minimum-cost circulation algorithm in CINDA is adapted from the famous push/relabel-based cost-scaling algorithm cs2 \cite{goldberg1997cs2}, which runs $O(nm\log(nC))$ for solving the minimum-cost circulation problem in MOT. 
Recent works \cite{unitCapMCC,unitCapMCF} proved that the blocking flow-based cost-scaling algorithm achieves the best ever bound $O(\text{min}\{n^{2/3}, m^{1/2}\}m\text{log}(nC))$ for solving minimum-cost circulation problem on general unit-capacity graphs. Enlightened by their works, we introduce the idea of blocking flow to our algorithm and achieve a theoretical bound of $O(n^{1/2}m\text{log}(nC))$. This bound is not only significantly better than cs2, but also better than the best bound mentioned above. The improvement compared with blocking flow-based cost-scaling algorithm is based on our observation that our graph is more than a unit-capacity graph: the overwhelming majority of the vertices in our graph has only one input arc or one output arc. More details will be given in section 5.3.

In the following, we will present the outline of CINDA implementation to show how the idea of blocking flow is aggregated and prove its complexity by leveraging these added ingredients and our own graph specialty. The outline of CINDA is given in Fig. \ref{fig:flowchart} and Alg. \ref{alg:cs2}. 
To begin with, we need several definitions, most of which are adapted from \cite{unitCapMCC}.

\subsection{Some definitions}
As in section \ref{method:mcc}, $G(V,E)$ is the input circulation network and we assign a cost function $c(v,w)$ and a unit capacity ($u(v,w)=1$) to each arc $(v,w)\in E$. For the purpose of analysis, for each arc $(v,w)\in E$, we add a reverse arc $(w,v)$ to $E$. The capacity of that arc is zero and the cost is $-c(v,w)$. A \textit{flow} $f$ on $G$ is an anti-symmetric function on arcs, \textit{i.e.}, $f(v,w)=-f(w,v)$. We call a flow \textit{feasible flow} if it satisfies all the capacity constrains for the arcs in $E$ (Eq.(8)) and conservation constrains for the nodes in $V$ (Eq.(9)). If only capacity constraints are satisfied, we call it a \textit{pseudo flow}. A \textit{circulation} is a special type of flow which can be fully decomposed into flows along cycles. On our circulation network, any feasible flow is a circulation.

Given a flow $f$, the \textit{residual capacity} of an arc $(v,w)$ is $r_{vw}=u(v,w)-f(v,w)$. An arc $(v,w)$ is called \textit{residual} if $r_{vw}>0$ and \textit{saturated} otherwise. 
The \textit{excess} of node $v$ with respect to flow $f$ is $e_f(v)=\sum_{(v,w)\in E}f(v,w)$. We call a node excess node if its excess is positive and deficit node if the excess is negative. The \textit{total excess} of a network is the summation of the excesses of all its excess nodes.

A \textit{cut} $X$ of $G$ is a non-empty proper subset of $V$. The \textit{excess} of $X$ is the summation of the excesses of vertices in $X$. An arc $(v,w)$ \textit{crosses} cut $X$ if $v\in X$ but $w\notin X$. 

We associate each node $v$ in $G$ with a real number $p(v)$. We refer $p(v)$ as the \textit{price} of node $v$. The \textit{reduced cost} of arc $(v,w)$ is defined as $c_p(v,w) = c(v,w) + p(v) - p(w)$. A flow $f$ is $\epsilon$-\textit{optimal} with respect to a price function $p$, if $c_p(v,w)\geq -\epsilon$, for each residual arc $(v,w)$. 

Arc $(v,w)$ is an \textit{admissible arc} if it is a residual arc with negative reduced cost. The \textit{admissible network} $G_A=(V_A,E_A)$ of $G$ is defined as the sub-network consisting solely of admissible arcs with capacity function $r_{vw}$ and cost function $c_p(v,w)$ and vertices incident with admissible arcs. 

    
     
    
\begin{algorithm}[t]
    \SetKwInput{KwInput}{Input}
    \SetKwInput{KwOutput}{Output}
    \SetKw{Continue}{continue}
    \DontPrintSemicolon
    
    \KwInput{Set of detections $\mathcal{X} = \{x_i\}$}
    \KwOutput{Set of trajectories $\mathcal{T} = \{T_k\}$}
    $G(V,E)\leftarrow \text{ContructCirculationNetwork}(\mathcal{X})$\\
    $G\leftarrow \text{Arc-fixing}(G)$\\
    $\epsilon \gets C$; $\textit{p(u)} \gets 0, \forall u\in V$ // \texttt{$C$:max arc cost} \\
    \lIf {$\exists$ \text{a feasible circulation } $f_0$}
    {$f\gets f_0$}
    \lElse{\Return null}
    \texttt{// main loop}\\
    \While {$\epsilon \geq 1/n$ }
    {
     $\epsilon\gets \epsilon/2$\\
      \texttt{// saturate admissible arcs}\\
     \For{each admissible arc $(v,w)$}{
        $f(v,w)\gets 1$
     }
     \texttt{// RESTORE through push\&set-relabel}\\
     \While {$f$ is not a feasible circulation}
     {
            \texttt{// step 1: set-relabel to create a blocking flow}\\
            \While {$\exists$ an excess node $v\notin V_A$ }{
                $S\gets\{v\in V_A | \text{ an deficit node is reachable from } v\text{ in } G_A\}$\\
                $\forall v\in S$, $p(v)\gets p(v)+\epsilon$ \\
            }
		 	\texttt{// step 2: push/relabel along the blocking flow}\\
        	\For{each excess node $v$ in $V$}{
            	push/relabel $v$ until it becomes balanced \\
            }
    }
         \texttt{// Decrease $\epsilon$ through price refinement}\\
     \While{\text{no cycle exists in $G_A$}}{
        \text{refine $p$ such that $f$ is $\epsilon/2$-optimal}\\
        \lIf{succeed}{$\epsilon\gets \epsilon/2$}
     }
    }
    $\mathcal{T} \leftarrow \text{Flow2Trajectories}(f)$ \\
    \Return $\mathcal{T}$
    
    \caption{
    CINDA: Circulation Network-based Data Association in MOT}
    \label{alg:cs2}
\end{algorithm}
Given a flow $f$ and its admissible network $G_A$, we call a flow $g$ on $G_A$ an \textit{improving flow}, if pushing $g$ on $G_A$ decreases the total excess of the network while does not create any new excess or deficit vertex. An improving flow is \textit{blocking} if after pushing $g$ on $G_A$ (with respect to flow $f$), no path in $G_A$ (with respect to flow $f+g$) from an excess vertex to a deficit vertex exists.

\subsection{Proposed minimum-cost circulation algorithm}
The major steps in our framework are shown in Fig. \ref{fig:flowchart} and Alg. \ref{alg:cs2}. The initialization steps construct the graph from detections. After clipping some large-cost arcs \cite{goldberg1997cs2}, we find an initial feasible circulation. 
In our implementation, the initial feasible circulation is obtained by simply setting the flow at each arc to 0. The main loop (line 7-30 in Alg. \ref{alg:cs2}) is built upon cost-scaling method for solving the minimum-cost circulation problem. Each iteration makes current circulation closer to the optimality, by refining an $\epsilon\text{-optimal}$ circulation to an $\epsilon/2\text{-optimal}$ circulation. Once $\epsilon < 1/n$, current circulation is the optimal solution \cite{networkbook}. The main difference between our algorithm and cs2 occurs in step 2 \textit{push/relabel} (line 21-23 of Alg. \ref{alg:cs2}). In our algorithm, the flow pushing is guided by the blocking flow directions, which is not the case in cs2. The implication of the modifications on the complexity analysis will be discussed in the next paragraph, as well as at the end of Section 5.4. Below we give further details of the algorithm.

In each iteration of the main loop, the threshold $\epsilon$ is halved and thus current circulation is no longer $\epsilon$-optimal. To refine it, all admissible arcs are first saturated so that we obtain an $\epsilon\text{-optimal}$ pseudo flow. However, what we want is a circulation where there is no excess or deficit node. Therefore the $\texttt{RESTORE}$ loop (line 14-24 in Alg.\ref{alg:cs2}) tries to restore an $\epsilon\text{-optimal}$ pseudo flow to an $\epsilon\text{-optimal}$ feasible flow, which is the circulation we want. There are two steps in $\texttt{RESTORE}$. The $\texttt{set-relabel}$ step updates the price function $p$ to generate a blocking flow from excess to deficit nodes so as to guide the flow pushing in the second step to decrease the total excess of the network. In each iteration of $\texttt{set-relabel}$, we will add $\epsilon$ to the prices of deficit nodes and nodes that have accesses to a deficit node in $G_A$ and thus create more admissible arcs, growing $G_A$ a little bit. $\texttt{set-relabel}$ stops when $G_A$ grows large enough to include all excess nodes. The $\texttt{push/relabel}$ step pushes flow out of excess nodes along the direction indicated by the blocking flow. In $\texttt{set-relabel}$, the price of each deficit node is guaranteed to increase $\epsilon$ in every iteration. In $\texttt{push/relabel}$, because the flow pushing is guided by blocking flow, we are guaranteed to decrease the total excess of the network at least one with $O(m)$ pushes. These two properties are the key for our proof of Lemma \ref{lemma:numIt}, which indicates that the bound to the number of iterations in one call of $\texttt{RESTORE}$ is equivalent to the bound of the total excess. After $\texttt{RESTORE}$, we get an circulation that is $\epsilon\text{-optimal}$. If there are no cycles in $G_A$ now, it is possible that current circulation is not only $\epsilon$-optimal but also $\epsilon/2\text{-optimal}$ \cite{goldberg1997cs2}. Under such condition, price refinement (line 26-29 in Alg.\ref{alg:cs2}) is conducted, trying to manipulate only the price function $p$ such that the circulation becomes $\epsilon/2\text{-optimal}$.

\subsection{Special structures of the graph in MOT identity inference problem}
A graph is called a \textit{unit-capacity graph}, if the capacity of every arc is one. The graphs of our formulated circulation networks in the MOT problem are unit-capacity graphs. 
A vertex is with \textit{unit capacity} if it is associated with only one incoming arc (no limitation to outgoing arcs) or one outgoing arc (no limitation to incoming arcs). 
The other specialty of the graphs in our circulation networks is that except for the dummy node $s$, each node is with unit capacity. 
For a unit-capacity graph, if all of its vertices are with unit capacity, we call it a \textit{unit-vertex-capacity graph}. If most of the vertices in the graph are with unit-vertex-capacity but with $O(1)$ exceptions, we call such graph as \textit{overwhelming unit-vertex-capacity graph}. Our graphs are overwhelming unit-vertex-capacity graphs with only one vertex exception. This property is critical for our proof of Lemma \ref{lemma:t_ex}.
In the following, we will prove that based on CINDA shown in Fig. \ref{fig:flowchart} and Alg. \ref{alg:cs2}, the minimum-cost circulation problem on our graphs along with the whole identity inference problem can be solved in $O(n^{1/2}m\text{log}(nC))$. 

\subsection{Proof of the complexity}
It is clear that other than the main loop (line 7-32 in Alg. \ref{alg:cs2}), the steps of constructing the circulation network from the detections (line 1), clipping high-cost arcs (line 2) and extracting trajectories from the minimum-cost circulation (line 31) have the overall complexity of $O(m)$. 
Thus, the bound of our framework is determined by the complexity of the main loop. Note that these clipped arcs will be checked occasionally in case they will still be utilized in the optimal solution. Because $\epsilon$ is halved at least one time in each iteration, the main loop terminates within $\log(nC)$ iterations \cite{networkbook}. Since the task of refining an $\epsilon/2$-optimal pseudo flow into an $\epsilon/2$-optimal feasible flow is fulfilled by the inner loop \texttt{RESTORE} (line 14-24 in Alg. \ref{alg:cs2}), we first prove the bound $O(n^{1/2}m)$ for one call of \texttt{RESTORE} as summarized in Lemma \ref{lemma:restore}.

From Alg. \ref{alg:cs2}, we can see that the prices of nodes are non-decreasing in step 1 of $\texttt{RESTORE}$. This is also true in step 2 when we relabel excess nodes. All the changes are in the unit of $\epsilon$. Thus, we assign each node a non-negative integer value $d$ to indicate how many $\epsilon$s have been added to its price. For any node $v\in V$, $d(v)=(p(v)-p_0(v))/\epsilon$. Here $p_0(v)$ is the initial price of node $v$ in the current call of $\texttt{RESTORE}$. Assume $f_0$ is the initial $\epsilon$-optimal flow in the current call of $\texttt{RESTORE}$ before we saturate admissible arcs and $f$ is the current $\epsilon/2$-optimal pseudo flow. Define the arc set $E^+=\{(v,w)\in E | f(v,w)>f_0(v,w)\}$ based on flow $f$. We divide the iterations in a single call of $\texttt{RESTORE}$ into two phases. The first phase ends when $d(v)\geq\triangle$ for any deficit node $v$ and the second phase finishes the restoration. Parameter $\triangle$ can be any positive real number, whose optimal value will be discussed in Lemma \ref{lemma:restore}. The following lemma outlines the total number of iterations we need for the two phases.

\begin{lemma}\label{lemma:numIt}
The first phase ends with $O(\triangle)$ iterations of \texttt{step 1} and \texttt{step 2}. 
Assume the total excess is $\tau$ when first phase ends. Then the second phase takes $O(\tau)$ iterations of \texttt{step 1} and \texttt{step 2}.
\end{lemma}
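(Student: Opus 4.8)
The plan is to treat the two phases separately and to rely on exactly the two structural facts about \texttt{RESTORE} recorded above: \emph{(i)} each round of \texttt{set-relabel} raises the price of every current deficit node by $\epsilon$, i.e.\ increments $d(\cdot)$ by one on each node that is a deficit node at that moment; and \emph{(ii)} each invocation of \texttt{push/relabel} pushes a blocking improving flow, which strictly decreases the total excess while creating no new excess or deficit node. As a preliminary I would record two monotonicity consequences that hold throughout a single call of \texttt{RESTORE}: prices never decrease (the relabels in \texttt{step 1} and in \texttt{step 2} only add $\epsilon$), so $d(v)=(p(v)-p_0(v))/\epsilon$ is a non-decreasing non-negative integer for every $v$; and, since the arc-saturation step is executed once before \texttt{RESTORE} is entered and property \emph{(ii)} forbids new deficit nodes, the set of deficit nodes is non-increasing and the total excess is non-increasing for the rest of the call.

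For the first phase I would argue as follows. Whenever a deficit node still exists at the start of a \texttt{RESTORE} iteration, \texttt{step 1} performs at least one relabel round, and by \emph{(i)} that round increments $d$ on every node that is currently a deficit node. Combining this with the monotonicity of $d$ and of the deficit-node set, after $\lceil\triangle\rceil$ such iterations every node that is \emph{still} a deficit node has $d\ge\triangle$ (a node that left the deficit set earlier is irrelevant to the stopping condition), so the first phase terminates. Hence it uses $O(\triangle)$ iterations of \texttt{step 1} and \texttt{step 2}.

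For the second phase I would invoke \emph{(ii)} directly. At the moment the first phase ends the total excess equals $\tau$; by the monotonicity above it can only go down from there, and every \texttt{push/relabel} step that still has an excess node to process reduces it by at least one (the $O(m)$ pushes are the per-iteration work, not the iteration count). Therefore within $\tau$ further iterations the total excess reaches zero, $f$ becomes a feasible circulation, and \texttt{RESTORE} stops; this is $O(\tau)$ iterations of \texttt{step 1} and \texttt{step 2}.

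The part I expect to require the most care is the bookkeeping that makes both counts charge simultaneously against \emph{iterations of \texttt{step 1} and \texttt{step 2}} — that is, ruling out \texttt{RESTORE} iterations that advance neither the deficit-node $d$-values (the phase-1 potential) nor the total excess (the phase-2 potential). Concretely I would establish the invariant that every iteration of the \texttt{RESTORE} loop either executes at least one relabel round of \texttt{set-relabel} (hence advances the phase-1 potential by \emph{(i)}) or else \texttt{push/relabel} acts on a nonempty blocking flow (hence advances the phase-2 potential by \emph{(ii)}); this comes down to checking that ``all excess nodes already lie in $V_A$'' forces the improving flow of \texttt{step 2} to be nonempty, and that ``some deficit node remains'' forces \texttt{set-relabel} to perform at least one round. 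Granting this invariant, the number of \texttt{RESTORE} iterations is at most the phase-1 count plus the phase-2 count, which yields the claimed $O(\triangle)$ and $O(\tau)$ bounds.
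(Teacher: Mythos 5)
Your overall architecture matches the paper's: phase~1 is charged to the $\epsilon$-increments of the deficit nodes' prices (each execution of \texttt{step 1} raises every deficit node's price by $\epsilon$, and \texttt{step 2} neither creates deficit nodes nor lowers prices), and phase~2 is charged to the total excess, which is non-increasing and drops by at least one per iteration. The phase~1 half of your argument is essentially the paper's.

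The gap is in how you justify the phase~2 potential drop. You take as a ``structural fact'' that each invocation of \texttt{step 2} pushes a blocking improving flow, hence strictly decreases the total excess while creating no new excess or deficit node. That is not what the algorithm does: as stated in Alg.~1, \texttt{step 2} is a push/relabel sweep over excess nodes, and in the implementation it must be truncated after $O(m)$ pushes --- otherwise a single \texttt{step 2} would cost $O(nm)$ and Lemma~\ref{lemma:bnd4steps} would fail. A truncated push/relabel sweep does create new (intermediate) excess nodes, and, crucially, an $O(m)$-push sweep is \emph{not} automatically guaranteed to reduce the total excess at all; this is exactly the defect of cs2 that the paper identifies and repairs. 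The paper's proof therefore has to argue the strict decrease: pushes are prioritized along the arcs of the blocking flow found in \texttt{step 1}, so within at most $m$ pushes one excess-to-deficit path is saturated and the total excess drops by one, while no new deficit node is ever created. By asserting your property (ii) you assume away precisely this step, which is the only nontrivial content of the second half of the lemma. (Your closing worry about iterations that advance neither potential is the right instinct, but the resolution again runs through the blocking-flow guidance: after \texttt{step 2} the blocking property forces \texttt{step 1} to relabel in the next iteration, or the iteration itself delivers a unit of excess; simply postulating a ``nonempty blocking improving flow'' per iteration does not establish it under the $O(m)$-push budget.)
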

\begin{proof}
The first part of the lemma is clear that one iteration of step 1 will increase the price of every deficit node $\epsilon$, while step 2 will not create new deficit nodes or decrease the price of any node. The proof of the second part needs some implementation details. In the implementation of step 2, we do not wait until that all the excess nodes become balanced. Otherwise, the $\texttt{RESTORE}$ will be fulfilled with one iteration of step 2 with a high complexity of $O(nm)$. We will break the for loop after $O(m)$ pushes have been conducted. The pushes are guided by the blocking flow found in step 1. 
In more details, in step 2, the push/relabel operations will be conducted first on the excess nodes that the blocking flow goes through.
For each excess node, its out-going arcs that the blocking flow occupies has higher priority to push flow on. 
Thus with at most $m$ pushes, we will saturate one path from an excess node to a deficit node. Pushing flow along this saturated path decreases the total excess of the network one. 
Because the push/relabel operations in step 2 never increase the total excess and neither does step 1, so the second phase takes totally $O(\tau)$ iterations of \texttt{step 1} and \texttt{step 2}.
\end{proof}
From Lemma \ref{lemma:numIt}, we know that the total number of iterations in one call of \texttt{RESTORE} is $O(\triangle + \tau)$. Then the question is to find a bound of $\tau$ by finding the maximum number of total excess at the end of the first phase. We will show that this bound is related to $\Delta$ and we can get the overall bound combining phase 1 and phase 2 by selecting an optimal $\Delta$. To start with, we need the following two lemmas.

\begin{lemma}\label{lemma:d}
If $(v,w)\in E^+, \text{ then } d(w)\geq d(v)-3$.
\end{lemma}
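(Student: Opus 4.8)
The plan is to read the bound off from two snapshots of the algorithm — the flow $f_0$ present at the start of the current call of \texttt{RESTORE} and the current pseudoflow $f$ — rather than chasing prices through the whole sequence of pushes and relabels. The point is that the reduced cost $c(v,w)+p(v)-p(w)$ is bounded below by applying the optimality of $f_0$ to the arc $(v,w)$ itself, and bounded above by applying the optimality of $f$ to the reverse arc $(w,v)$; substituting $p(v)=p_0(v)+\epsilon\,d(v)$ into the resulting two-sided estimate then collapses exactly to $d(w)\ge d(v)-3$.

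First I would record two elementary consequences of $(v,w)\in E^+$ on our unit-capacity graph. Since $f(v,w)>f_0(v,w)$ and always $f(v,w)\le u(v,w)$, we get $f_0(v,w)<u(v,w)$, so $(v,w)$ is residual under $f_0$. Moreover, on this graph the flow on an original arc lies in $\{0,1\}$ and on a reverse arc in $\{-1,0\}$, so $f(v,w)>f_0(v,w)$ together with integrality forces $f(v,w)$ up to its maximal feasible value; checking the original-arc and reverse-arc cases separately, this makes the residual capacity of $(w,v)$ equal to one, so $(w,v)$ is residual under the current $f$.

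Next I would bring in the scaling invariants. The flow $f_0$ that enters the current \texttt{RESTORE} is $2\epsilon$-optimal with respect to $p_0$ (it was optimal for the parent scale before $\epsilon$ was halved), so for the residual arc $(v,w)$ we get $c(v,w)+p_0(v)-p_0(w)\ge -2\epsilon$. The pseudoflow $f$ stays $\epsilon$-optimal with respect to the evolving price $p$ throughout \texttt{RESTORE} — saturating the admissible arcs makes every residual arc have nonnegative reduced cost with respect to $p_0$, and both \texttt{set-relabel} and \texttt{push/relabel} preserve $\epsilon$-optimality — so for the residual arc $(w,v)$, using $c(w,v)=-c(v,w)$, we get $-\bigl(c(v,w)+p(v)-p(w)\bigr)\ge -\epsilon$. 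Substituting $p(v)=p_0(v)+\epsilon\,d(v)$ and $p(w)=p_0(w)+\epsilon\,d(w)$ into this inequality and adding the previous one yields $\epsilon\bigl(d(w)-d(v)\bigr)\ge -\epsilon-2\epsilon=-3\epsilon$, i.e.\ $d(w)\ge d(v)-3$. The constant $3$ decomposes as $2$ (the $2\epsilon$ slack of $f_0$, which is two price increments of size $\epsilon$) plus $1$ (the one-increment slack of the current $f$).

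The step I expect to require the most care is the invariant the argument rests on: that $f$ really does remain $\epsilon$-optimal under the \emph{changing} price function throughout \texttt{RESTORE}, and in particular that a \texttt{set-relabel} update — which adds $\epsilon$ simultaneously to the prices of the whole set $S$ of vertices that can reach a deficit node in $G_A$ — never creates a residual arc whose reduced cost drops below $-\epsilon$. This is exactly where the definition of $S$ is used (an arc leaving $S$ must be either saturated or already carry enough slack), and it is also where the exact constant is pinned down, so the distinction between the $2\epsilon$-optimality of $f_0$ and the $\epsilon$-optimality of $f$ must be tracked scrupulously. A tempting but much messier alternative would be to isolate a single admissible push on $(v,w)$, bound $d(v)-d(w)$ at that instant, and then propagate forward through the later relabels of $v$; the two-snapshot comparison above is preferable precisely because it never reasons about that evolution.
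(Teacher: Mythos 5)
Your argument is correct and coincides with the paper's own proof: both use that $(v,w)\in E^+$ makes $(v,w)$ residual for $f_0$ and $(w,v)$ residual for $f$, invoke the $2\epsilon$-optimality of $f_0$ (w.r.t.\ $p_0$) and the $\epsilon$-optimality of $f$ (w.r.t.\ $p$), and sum the two reduced-cost inequalities to obtain $d(w)\geq d(v)-3$. The extra details you supply (why $(w,v)$ is residual, and that \texttt{set-relabel}/\texttt{push/relabel} preserve $\epsilon$-optimality) are correct elaborations of facts the paper asserts without comment, not a different route.
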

\begin{proof}
As $(v,w)\in E^+$, $f(v,w)>f_0(v,w)$, so arc $(v,w)$ is a residual arc corresponding to flow $f_0$, and $(w,v)$ is a residual arc correspond to flow $f$. Since we have halved the $\epsilon$ at the beginning of this \texttt{RESTORE}, $f_0$ now is $2*\epsilon$-optimal and $f$ is $\epsilon$-optimal. Thus, $c_{p_0}(v,w)=c(v,w)+p_0(v)-p_0(w) \geq -2*\epsilon$ and $c_p(w,v)=c(w,v)+p(w)-p(v) \geq -\epsilon$. Summing these two inequalities together, we have $d(w) - d(v) \geq -3$.
\end{proof}
\begin{lemma}\label{lemma:excess}
Assume $X$ is a valid cut for flow $f$ on a unit-capacity graph. If $X$ contains all the deficit nodes while no excess nodes, the total excess of the graph is at most $|\{(v,w)\in E^+ | (v,w) \text{ crosses } X\}|$.
\end{lemma}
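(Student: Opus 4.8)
The plan is to prove this by a cut argument, using the initial feasible circulation $f_0$ (which, being an honest circulation on our network, is balanced at every vertex) as a fixed reference so that only the arc-wise flow increments $f(v,w)-f_0(v,w)$ over the arcs crossing $X$ matter, and then bounding each positive increment by $1$ with the unit-capacity assumption.

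First I would observe that, since $X$ contains every deficit node and no excess node, $e_f(v)\le 0$ for all $v\in X$, and because the node excesses of any pseudo flow sum to zero, $e_f(X)=\sum_{v\in X}e_f(v)$ is exactly the negative of the total excess of the network. Expanding $e_f(X)$ and cancelling the contributions of arcs internal to $X$ by anti-symmetry of $f$, one is left with a signed sum over the arcs that cross $X$; the hypothesis on $X$ fixes the sign so that, after subtracting the identically-balanced reference (for which the same computation gives $\sum_{(v,w)\ \mathrm{crosses}\ X} f_0(v,w)=0$, as $f_0$ is a feasible circulation), the total excess of the network equals
\[
\sum_{(v,w)\ \mathrm{crosses}\ X}\bigl(f(v,w)-f_0(v,w)\bigr).
\]

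To finish, I would split this sum by whether $(v,w)\in E^+$. For $(v,w)\notin E^+$ we have $f(v,w)\le f_0(v,w)$, so these terms are non-positive and can be dropped. For $(v,w)\in E^+$, the unit-capacity assumption confines the flow on that arc to an interval of length one (this holds whether $(v,w)$ is an original arc, where $0\le f,f_0\le 1$, or a reverse arc, where $-1\le f,f_0\le 0$), hence $0< f(v,w)-f_0(v,w)\le 1$. Summing, the total excess is at most the number of arcs of $E^+$ that cross $X$, which is the claim.

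The main difficulty here is bookkeeping rather than ideas: one must pin down the orientation conventions so that the surviving positive increments line up with arcs crossing $X$ in exactly the direction named in the statement — not with their reverses, which cross $V\setminus X$ — and so that no spurious factor of two appears; this is precisely where the unit-capacity structure is used (for an original arc and its reverse at most one of the two lies in $E^+$, and each can gain at most one unit of flow relative to $f_0$). A minor point to spell out is the vanishing $\sum_{(v,w)\ \mathrm{crosses}\ X} f_0(v,w)=0$, which just records that $f_0$, the $\epsilon$-optimal flow at the start of the current \texttt{RESTORE} call, is a genuine feasible circulation on our network and therefore balanced at every vertex.
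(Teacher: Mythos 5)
Your proof is correct and follows essentially the same route as the paper's: both express the total excess as the net increment $f-f_0$ across the cut (you compute it on $X$ using that excesses sum to zero, the paper computes it on $\bar X$ and flips by anti-symmetry), drop the arcs outside $E^+$, and bound each remaining increment by one via the unit capacities. Your explicit handling of the orientation conventions and of the interval-length-one bound (rather than appealing to $f,f_0$ being binary) is a slightly more careful rendering of the same argument.
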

\begin{proof}
Let $\bar{X}=V-X$. For any arc crosses $X$, its inverse arc crosses $\bar{X}$. It is clear that $\bar{X}$ contains all the excess nodes in the graph while no deficit nodes. Thus, for $\bar{X}$, its excess is equal to the total excess of the graph. As $f$ and $f_0$ are anti-symmetric functions on arcs, 
$\sum_{v\in \bar{X}}e_f(v) = \sum \{f_0(v,w)-f(v,w)|(v,w) \text{ crosses }\bar{X}\} 
= \sum\{f(w,v) - f_0(w,v) | (w,v)\text{ crosses }X\}
\leq \sum \{f(w,v)-f_0(w,v) | (w,v) \text{ crosses } X \ \&\ (w,v)\in E^+ \}$. 
Because $f$ and $f_0$ are both binary functions, $\sum_{v\in \bar{X}}e_f(v) \leq |\{f(w,v)-f_0(w,v) | (w,v) \text{ crosses } X \ \&\ (w,v)\in E^+ \}|$. Thus the total excess is bounded by the number of arcs in $E^+$ that cross $X$.
\end{proof}
Now we prove the bound for $\tau$.
\begin{lemma} \label{lemma:t_ex}
The total excess when first phase ends is $O(n/\triangle)$.
\end{lemma}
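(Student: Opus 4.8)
The plan is to invoke Lemma~\ref{lemma:excess}: it suffices to exhibit one cut $X$ that contains every current deficit node but no excess node and across which at most $O(n/\triangle)$ arcs of $E^{+}$ pass; then $\tau$ is bounded by this quantity. I would produce such an $X$ by an averaging argument over a family of nested cuts built from the integer labels $d(\cdot)$, using Lemma~\ref{lemma:d} to argue that each $E^{+}$ arc participates in few of these cuts, and then using the graph structure to control the total number of $E^{+}$ arcs.

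For an integer $k$, set $X_{k}=\{v\in V : d(v)\ge k\}$. Since phase~1 ends exactly when every deficit node has $d(v)\ge\triangle$, each $X_{k}$ with $k\le\triangle$ contains all deficit nodes; discarding those $k$ that do not exceed the largest $d$-label reached by a remaining excess node leaves a subfamily $\mathcal{K}$ of cuts, each admissible for Lemma~\ref{lemma:excess}. An arc $(v,w)\in E^{+}$ crosses $X_{k}$ precisely for $k\in\{d(w)+1,\dots,d(v)\}$, and by Lemma~\ref{lemma:d} we have $d(v)-d(w)\le 3$, so every arc of $E^{+}$ crosses at most three members of $\mathcal{K}$. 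Hence the number of (arc, cut) incidences over $\mathcal{K}$ is at most $3\,\lvert E^{+}\rvert$, and some $X_{k}\in\mathcal{K}$ is crossed by at most $3\,\lvert E^{+}\rvert/\lvert\mathcal{K}\rvert$ arcs of $E^{+}$.

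What remains is to show $\lvert\mathcal{K}\rvert=\Omega(\triangle)$ and to bound $\lvert E^{+}\rvert$; the latter is where the overwhelming unit-vertex-capacity structure enters and turns the generic $O(m/\triangle)$ estimate into $O(n/\triangle)$. A reverse arc lies in $E^{+}$ only if the corresponding forward arc carries flow under the initial feasible circulation $f_{0}$, and because $f_{0}$ decomposes into cycles that (by Lemma~\ref{lemma:cycle}) all pass through $s$ and are vertex-disjoint off $s$ — each pre-node and each post-node having a unique out- or in-arc — the support of $f_{0}$ has only $O(n)$ arcs. A forward arc $(v,w)\in E^{+}$ carries flow under the current pseudo-flow $f$; the $O(n)$ arcs incident to the single exceptional vertex $s$ can be counted directly, and each remaining such arc can be charged to an incident unit-capacity vertex so that a balanced vertex is charged $O(1)$ times while an imbalanced vertex $v$ is charged $O(1+\lvert e_{f}(v)\rvert)$ times. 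Summing, $\lvert E^{+}\rvert=O(n)+O(\tau)$, since the total deficit magnitude equals the total excess $\tau$. Feeding this back into the previous paragraph gives $\tau\le 3\,\lvert E^{+}\rvert/\lvert\mathcal{K}\rvert=O((n+\tau)/\triangle)$, and solving this self-referential inequality — legitimate once $\triangle$ exceeds the implied constant, which is the only regime of interest for Lemma~\ref{lemma:restore} — yields $\tau=O(n/\triangle)$.

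I expect the two remaining technical points to be the actual work. The first is the charging estimate $\lvert E^{+}\rvert=O(n)+O(\tau)$: it relies on the fact that at most one flow-carrying arc can meet a unit-capacity vertex on its constrained side unless that vertex is imbalanced, and on careful bookkeeping at the deficit vertices, whose imbalance is not a priori $O(1)$. The second is the claim $\lvert\mathcal{K}\rvert=\Omega(\triangle)$, i.e.\ that a constant fraction of the cuts $X_{k}$ avoid all excess nodes; this follows from the blocking-flow discipline of step~2, which prevents an excess node from being relabelled more than a small fraction of $\triangle$ times during phase~1, but it does lean on that implementation detail rather than on the abstract structure of the algorithm.
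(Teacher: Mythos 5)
Your overall strategy---reduce to Lemma~\ref{lemma:excess} via a family of level cuts built from the labels $d(\cdot)$, then use Lemma~\ref{lemma:d} and an averaging argument to find one good cut---is the same skeleton as the paper's proof, but the step on which everything hinges is exactly the one you leave unproved: making the cuts free of excess nodes. You define $X_k=\{v : d(v)\ge k\}$ and keep only the levels $k\le\triangle$ that exceed the largest $d$-label of any excess node, and you need $\lvert\mathcal{K}\rvert=\Omega(\triangle)$ of them to survive. Nothing in Alg.~\ref{alg:cs2} guarantees this. Unlike the pure blocking-flow algorithm of \cite{unitCapMCC}, here \texttt{push/relabel} creates new excess nodes (the paper stresses precisely this difference after Theorem~2), and excess nodes themselves are relabelled both in \texttt{set-relabel} (whenever they can reach a deficit node in $G_A$) and in \texttt{push/relabel}; over the $O(\triangle)$ iterations of phase~1 an excess node's label can in principle climb essentially in lockstep with the deficit nodes' labels, leaving $\mathcal{K}$ empty or of size $o(\triangle)$ and voiding the averaging. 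Your appeal to ``the blocking-flow discipline of step~2'' is an assertion, not an argument, and since the whole difficulty of this lemma is the presence and movement of excess nodes, this is a genuine gap rather than a deferred routine detail. The paper sidesteps it by building the exclusion into the cut definition itself, $X_i=\{v \mid d(v)>i \ \text{and}\ v \ \text{is not an excess node}\}$, so every level $0\le i<\triangle$ yields a cut admissible for Lemma~\ref{lemma:excess} no matter how the excess labels evolve.

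The counting you do on top of the cuts is also genuinely different from the paper's, and more fragile. You average over arcs of $E^{+}$ (each crossing at most three levels, by Lemma~\ref{lemma:d}) and therefore need the global estimate $\lvert E^{+}\rvert=O(n+\tau)$, which you only sketch and which makes the final bound self-referential. The paper instead averages over vertices: each vertex lies in at most six slabs $Y_i=X_{i-3}-X_{i+3}$, so some slab has $O(n/(\triangle-6))$ vertices; Lemma~\ref{lemma:d} forces every $E^{+}$ arc crossing a slightly trimmed cut $X'_i$ to have an endpoint in that thin slab, and the overwhelming unit-vertex-capacity property lets each such crossing arc be charged to a distinct slab vertex, with a four-case analysis for the single exceptional vertex $s$. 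That per-cut, vertex-local charging yields $O(n/\triangle)$ directly, with no bound on $\lvert E^{+}\rvert$ and no inequality to solve for $\tau$. Your charging sketch for $\lvert E^{+}\rvert$ (cycle decomposition of $f_0$ through $s$, plus unit-vertex-capacity charging of flow-carrying arcs of $f$) looks completable, but until the excess-node issue above is repaired the proof does not go through; the natural repair is to adopt the paper's excess-excluding cuts, and once you do, the slab-based vertex count is the path of least resistance.
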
 

\begin{proof}
Our circulation network is an overwhelming unit-vertex-capacity graph where only the vertex $s$ is not with unit capacity. We can prove this lemma based on this specialty.
Consider the cuts $X_i=\{v|d(v)>i\ \&\ v \text { is not an excess node}\}$ for $0\leq i < \triangle$. We assume none of the $X_i$s is empty and $\triangle > 6$, otherwise the lemma is true based on Lemma \ref{lemma:excess}. From the definition of the first phase, we know all deficit nodes are included in any $X_i$ while no excess node is included. 
For $3\leq i < \triangle-3$, let $Y_i=X_{i-3}-X_{i+3}$. Any vertex can be in at most six sets of $Y_i$ and thus there exists some $Y_i$ that contains at most $6n/(\triangle-6)$ vertices. 
Let $Z'_i=X_i-X_{i+3}$ and  $Z''_i=X_{i-3}-X_i$. It is clear that $Z'_i \subseteq Y_i$ and $Z''_i \subseteq Y_i$, so $|Z'_i|\leq 6n/(\triangle-6)$ and $|Z''_i|\leq 6n/(\triangle-6)$ for some $i$. 
Assume every node has either only one incoming arc or one outgoing arc, except node $s$. Therefore, $s$ can fall into one of the four distinct sets: $X_{i+3},$ $Z'_i$, $Z''_i$, $G\backslash{X_{i-3}}$. We prove the lemma for each case.

If $s\in X_{i+3}$, let $X'_i=X_i-\{v|v\in Z'_i \ \&\  v \text{ has more than one outgoing residual arc}\}$. $X'_i$ contains all the deficit nodes. 
For each arc that crosses $X'_i$, it either originates from a vertex in $Z'_i$ that has only one outgoing arc, or ends at a vertex in $Z'_i$ with more than one outgoing arc but only one incoming arc. Hence totally we have at most $|Z'_i|$ arcs cross $X'_i$ and the total excess is at most $6n/(\triangle-6)$ by Lemma \ref{lemma:excess}.

If $s\in Z'_i$, let cut $X'_{i-3}=X_{i-3}-\{v|v\in Z''_i \ \&\  v \text{ has more than one outgoing residual arc}\}$. $X'_{i-3}$ contains all the deficit nodes. 
For each arc that crosses $X'_{i-3}$, it either originates from a vertex in $Z''_i$ that has only one outgoing arc or ends at a vertex in $Z''_i$ with more than one outgoing arc but only one incoming arc. Hence totally we have at most $|Z''_i|$ arcs cross $X'_{i-3}$ and the total excess is at most $6n/(\triangle-6)$ by Lemma \ref{lemma:excess}.

For $s\in G\backslash{X_{i-3}}$ or $s\in Z''_i$, the proofs are very similar as the above two conditions. In summary, the total excess is at most $O(n/\triangle)$ when the first phase ends.
\end{proof}

With the bound of $\tau$, we still need to know the bounds of the \texttt{step 1} and \texttt{step 2}. Here we show they are both linearly related to the graph size.

\begin{lemma}\label{lemma:bnd4steps}
On unit-capacity graphs, the bounds of \texttt{step 1} and \texttt{step 2} are both $O(m)$.
\end{lemma}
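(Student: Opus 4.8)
The plan is to bound the two steps separately, using the structure of the admissible network $G_A$ and the fact that the capacities are binary. For \texttt{step 1} (the \texttt{set-relabel} loop), each iteration does the following: identify the set $S$ of vertices in $V_A$ from which a deficit node is reachable in $G_A$, then raise their prices by $\epsilon$. The key observation is that $S$ can be computed by a single backward graph search (BFS/DFS) from the deficit nodes over the admissible arcs, which costs $O(|E_A|) = O(m)$. Raising the prices of the vertices in $S$ and re-identifying which arcs become admissible or inadmissible also costs $O(m)$, since each arc's reduced cost changes by a fixed amount determined by whether its endpoints are in $S$. So one iteration of \texttt{step 1} is $O(m)$; that is the claim. (The number of such iterations is controlled separately by Lemma~\ref{lemma:numIt}, not here.)

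For \texttt{step 2} (the \texttt{push/relabel} loop guided by the blocking flow), the argument is essentially the one sketched in the proof of Lemma~\ref{lemma:numIt}: I would break the \texttt{for} loop after $O(m)$ push/relabel operations. The point to verify carefully is that $O(m)$ pushes suffice to saturate at least one excess-to-deficit path in $G_A$. Here I would use two facts. First, because all arcs have unit capacity, a single push along an admissible arc saturates it, so no arc is pushed on more than a constant number of times before being removed from $G_A$ during this call of \texttt{step 2}; this bounds the number of \emph{saturating} pushes by $O(m)$. Second, each relabel increases some node's price (in units of $\epsilon$) and — because \texttt{step 1} has already produced a blocking-flow-guided admissible network in which every excess node can reach a deficit node — the number of relabels before an augmenting path is completed is $O(n)$, hence $O(m)$. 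Combining, after $O(m)$ operations we have sent a unit of flow along a full excess-to-deficit path, which is what \texttt{step 2} is required to accomplish before returning control to the \texttt{RESTORE} loop.

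The main obstacle I anticipate is pinning down the precise data-structure convention under which \texttt{step 2} "follows the blocking flow" so that the $O(m)$-pushes-per-augmentation claim is actually true and not merely plausible: one has to be sure that following the blocking-flow directions out of each excess node cannot get stuck cycling or re-examining arcs, which is exactly where the blocking property (no residual excess-to-deficit path after $f+g$) and the DAG-like layering implied by the price function $d$ do the work. I would make this rigorous by observing that within one call of \texttt{step 2} the admissible network only shrinks (prices are non-decreasing, so arcs only leave $G_A$), so the search for an augmenting path visits each arc $O(1)$ times, giving the $O(m)$ bound. With both \texttt{step 1} and \texttt{step 2} shown to be $O(m)$, the lemma follows, and it plugs directly into the iteration count $O(\triangle+\tau)$ from Lemmas~\ref{lemma:numIt} and~\ref{lemma:t_ex} to yield the $O(n^{1/2}m)$ bound for one call of \texttt{RESTORE}.
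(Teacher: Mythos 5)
Your proof is correct for what the lemma actually asserts, and it follows essentially the same route as the paper: for \texttt{step 2} the $O(m)$ bound holds by construction, because the implementation breaks the \texttt{for} loop after $O(m)$ pushes (exactly the paper's argument, which refers back to the implementation detail in the proof of Lemma~\ref{lemma:numIt}); for \texttt{step 1} you give an explicit argument --- one backward BFS/DFS from the deficit nodes over admissible arcs plus an $O(m)$ price/admissibility update --- where the paper simply cites Section~3 of \cite{unitCapMCC}, so your version is a bit more self-contained but not a different method.

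One caution about the extra material you add for \texttt{step 2}. The claim that $O(m)$ pushes suffice to saturate an excess-to-deficit path is not part of this lemma; it is the content of Lemma~\ref{lemma:numIt}, and your justification of it is flawed. You assert that ``within one call of \texttt{step 2} the admissible network only shrinks (prices are non-decreasing, so arcs only leave $G_A$),'' but raising the price of a node changes the reduced costs of its incident arcs in opposite directions, so a relabel can create new admissible arcs --- indeed, creating admissible arcs is precisely what relabeling is for --- and hence $G_A$ does not monotonically shrink during \texttt{step 2}. The paper's guarantee that one unit of total excess is removed within $O(m)$ pushes does not come from monotonicity of $G_A$; it comes from prioritizing pushes along the blocking flow identified in \texttt{step 1} (each excess node first pushes on the outgoing arcs the blocking flow occupies), so that at most $m$ pushes saturate one excess-to-deficit path. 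For the present lemma none of this is needed, since the per-call cost of \texttt{step 2} is capped at $O(m)$ by the early break, so your conclusion stands; just do not rely on the shrinking-$G_A$ argument when you get to Lemma~\ref{lemma:numIt}.
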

\begin{proof}
The proof of \texttt{step 1} can be found in section 3 of \cite{unitCapMCC}. As mentioned in the proof of Lemma \ref{lemma:numIt}, the For loop of \texttt{step 2} will be broken after $O(m)$ pushes in our implementation. Thus, the bounds of \texttt{step 1} and \texttt{step 2} are both $O(m)$.
\end{proof}

Now, we can summarize the complexity of one call of \texttt{RESTORE}.
\begin{lemma} \label{lemma:restore}
The bound of one call of \texttt{RESTORE} is $O(n^{1/2}m)$.
\end{lemma}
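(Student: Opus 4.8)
The goal is to bound one call of \texttt{RESTORE} by $O(n^{1/2}m)$, and the natural strategy is to combine the iteration count from Lemma~\ref{lemma:numIt}, the per-iteration cost from Lemma~\ref{lemma:bnd4steps}, and the excess bound from Lemma~\ref{lemma:t_ex}, then optimize over the free parameter $\triangle$. First I would recall that by Lemma~\ref{lemma:numIt} the total number of iterations of \texttt{step~1} and \texttt{step~2} across both phases is $O(\triangle + \tau)$, where $\tau$ is the total excess at the end of phase~1. By Lemma~\ref{lemma:t_ex}, $\tau = O(n/\triangle)$, so the iteration count is $O(\triangle + n/\triangle)$. Each iteration costs $O(m)$ by Lemma~\ref{lemma:bnd4steps} (both \texttt{step~1} and \texttt{step~2} are $O(m)$), giving a bound of $O\bigl((\triangle + n/\triangle)\,m\bigr)$ for the whole call of \texttt{RESTORE}.

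**Optimizing the parameter.** The remaining work is the elementary minimization of $\triangle + n/\triangle$ over positive $\triangle$: this is minimized at $\triangle = n^{1/2}$, where it equals $2n^{1/2}$. Since $\triangle$ was introduced purely as an analysis parameter (it only marks the conceptual split between phase~1 and phase~2 and is not used by the algorithm itself), we are free to set $\triangle = n^{1/2}$ in the analysis. Substituting back yields the bound $O(n^{1/2}m)$ for a single call of \texttt{RESTORE}, which is exactly the claim. I would also note explicitly that the edge cases excluded in the earlier lemmas (e.g.\ $\triangle \le 6$, or some $X_i$ empty) only make the bound smaller, so they do not affect the conclusion.

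**Where the difficulty lies.** The genuinely hard parts of this argument are all already discharged in the preceding lemmas — in particular Lemma~\ref{lemma:t_ex}, whose proof exploits the overwhelming unit-vertex-capacity structure of the graph via the careful cut surgery ($X_i'$, $Z_i'$, $Z_i''$), and Lemma~\ref{lemma:numIt}, which hinges on the implementation detail that \texttt{step~2} is truncated after $O(m)$ pushes guided by the blocking flow so that each iteration strictly decreases the total excess. Given those, the proof of Lemma~\ref{lemma:restore} is essentially bookkeeping: assemble the three ingredients and balance one algebraic expression. The only thing to be careful about is making sure the $O(\cdot)$ from Lemma~\ref{lemma:t_ex} is correctly threaded through — i.e.\ that the hidden constant in $\tau = O(n/\triangle)$ does not interact badly with the choice $\triangle = n^{1/2}$ — but since both phases contribute additively and the constants are absolute, this is immediate.
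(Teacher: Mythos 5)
Your proposal is correct and follows exactly the paper's argument: the paper likewise combines the $O(\triangle+\tau)$ iteration count from Lemma~\ref{lemma:numIt} with the $\tau=O(n/\triangle)$ bound of Lemma~\ref{lemma:t_ex} and the $O(m)$ per-iteration cost of Lemma~\ref{lemma:bnd4steps}, then sets $\triangle=\sqrt{n}$. Your write-up merely makes the balancing step and the edge cases explicit, which the paper leaves implicit.
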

\begin{proof}
By setting $\triangle=\sqrt{n}$, the number of iterations needed for one call of \texttt{RESTORE} is $O(\sqrt{n})$. Based on Lemma \ref{lemma:bnd4steps}, each iteration takes $O(m)$ time, so the bound for \texttt{RESTORE} is $O(n^{1/2}m)$. 
\end{proof}
Now we give the complexity analysis of price refinement (line 26-29 in Alg. \ref{alg:cs2}). 
Price refinement is trying to get the $\epsilon/2$-optimal circulation by only manipulating the price function. This is possible when there is no cycle in the admissible network $G_A$. Finding a cycle in a directed graph can be fulfilled in $O(m)$ using depth first search. If $G_A$ is a DAG, the bound of one iteration of price refinement is also $O(m)$ \cite{sp_scaling}. 
In our implementation, we will give up the price refinement if the number of iterations exceeds $\sqrt{n}$. Thus, the bound for price refinement is $O(n^{1/2}m)$, which is the same as \texttt{RESTORE} and will not dominate the time consumption of the framework. 
Note that this early stopping strategy will not influence the correctness of our algorithm. Because the price refinement is only applied on price function $p$, no imbalanced node is created throughout the price refinement. Also because none of the operations in price refinement decreases the reduced costs of admissible arcs or decreases the reduced costs of other arcs more than $\epsilon$, no matter when we break the price refinement, the flow is still guaranteed to be an $\epsilon$-optimal circulation. 

Below we give the overall complexity of our framework.
\begin{theorem}
Our minimum-cost circulation-based framework solves the identity inference problem in MOT in $O(n^{1/2}m\text{log}(nC))$.
\end{theorem}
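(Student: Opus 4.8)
The plan is to aggregate the per-component bounds already established into a single running-time bound for Algorithm~\ref{alg:cs2}, and then invoke the equivalence theorem proved above to convert the cost of the minimum-cost circulation solver into a cost for the identity inference problem. First I would dispense with everything outside the main loop: constructing the circulation network from $\mathcal{X}$ (line~1), arc-fixing / clipping the high-cost arcs (line~2), finding the trivial initial feasible circulation $f\equiv 0$ (lines~3--6), and extracting the trajectories from the final circulation via \texttt{Flow2Trajectories} (line~31) each touch every arc a constant number of times, so together they cost $O(m)$ and are dominated by the main loop.

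Next I would bound the number of iterations of the main loop (lines~7--30). The scaling parameter starts at $\epsilon=C$ and is halved at least once per iteration, and by the standard cost-scaling termination criterion the loop exits once $\epsilon<1/n$ \cite{networkbook}, so the loop runs $O(\log(nC))$ times; this is where discarding the binary search for the flow amount already buys the $\log n$ factor over the flow-based bounds. Then I would bound the work inside one iteration. Saturating all admissible arcs (lines~11--13) is $O(m)$. One call of \texttt{RESTORE} (lines~14--24) is $O(n^{1/2}m)$ by Lemma~\ref{lemma:restore}, which itself rests on Lemma~\ref{lemma:numIt} (the iteration count of \texttt{RESTORE} is $O(\triangle+\tau)$), Lemma~\ref{lemma:t_ex} ($\tau=O(n/\triangle)$ when phase one ends, using that our graph is an overwhelming unit-vertex-capacity graph), and Lemma~\ref{lemma:bnd4steps} (each of \texttt{step 1} and \texttt{step 2} is $O(m)$ on a unit-capacity graph), optimized at $\triangle=\sqrt{n}$. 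The price-refinement loop (lines~26--29) is likewise $O(n^{1/2}m)$, since each iteration is an $O(m)$ DAG/cycle computation and we cap the loop at $\sqrt{n}$ rounds, an early-stopping rule that (as argued in Section~5.4) preserves the $\epsilon$-optimality invariant and creates no imbalanced node. Hence a single main-loop iteration is $O(n^{1/2}m)$, the whole main loop is $O(n^{1/2}m\log(nC))$, and this dominates the $O(m)$ overhead — giving the running time of the circulation solver.

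Finally I would close the loop with correctness: once $\epsilon<1/n$ the $\epsilon$-optimal circulation $f$ is an exact minimum-cost circulation on $G$ \cite{networkbook}, and by Theorem~1 (together with Lemmas~\ref{lemma:cycle}--\ref{lemma:k_flow}) this minimum-cost circulation maps to the optimal set of $s$--$t$ paths in the associated flow network, which is exactly the MAP hypothesis of Eq.(7-9). Therefore CINDA outputs an optimal solution of the identity inference problem in $O(n^{1/2}m\log(nC))$ time. I do not expect a genuine obstacle here — the statement is essentially a bookkeeping aggregation of Lemmas~\ref{lemma:numIt}--\ref{lemma:restore} plus the equivalence theorem — but the one point that needs care is verifying that the two early-termination heuristics (breaking \texttt{step 2} after $O(m)$ pushes, and abandoning price refinement after $\sqrt{n}$ rounds) compromise neither the $\epsilon$-optimality invariant nor the eventual exactness of the returned circulation, so that the complexity gain is not silently paid for with correctness.
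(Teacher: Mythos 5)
Your proposal is correct and follows essentially the same route as the paper: $O(\log(nC))$ main-loop iterations from the cost-scaling termination criterion, each iteration bounded by $O(n^{1/2}m)$ via Lemma~\ref{lemma:restore} (and the $\sqrt{n}$-capped price refinement), with the $O(m)$ work outside the loop dominated, and optimality/MAP-equivalence supplied by Theorem~1. The extra care you take about the two early-stopping heuristics is exactly the point the paper addresses in the surrounding text of Section~5.4, so nothing is missing.
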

\begin{proof}
As mentioned before, the cost scaling method guarantees to converge to optimal solution by calling $\text{log}(nC)$ times of \texttt{RESTORE}. Thus, the bound for the main loop is $O(n^{1/2}m\text{log}(nC))$. Other than the main loop, the other panels of our framework take linear time, so the overall bound of our framework is $O(n^{1/2}m\text{log}(nC))$.
\end{proof}

Note that our proofs are enlightened by \cite{unitCapMCC}, which proves the bound of a pure blocking flow-based cost-scaling algorithm on solving minimum-cost flow on unit-capacity graphs. However, in pure blocking flow-based algorithm, the set of imbalanced nodes keeps shrinking and no new imbalanced nodes are created throughout the $\texttt{RESTORE}$, while in our algorithm, the push/relabel operations create new excess nodes. Thus, we cannot prove the bound using this specialty of the imbalanced nodes as did in \cite{unitCapMCC}. Luckily, we found that push/relabel will only happen on excess nodes and stop once the nodes become balanced and thus the set of deficit nodes keeps shrinking. We leverage this specialty and prove that the best ever bound can be achieved using our algorithm. 

Compared with cs2 \cite{goldberg1997cs2}, the idea of \texttt{set-relabel} and \texttt{price refinement} in our implementation are both adapted from it. However, cs2 did not explicitly use the blocking flow created by \texttt{set-relabel} to guide flow pushing. Thus, it is not guaranteed to decrease the total excess of the network within $O(m)$ pushes. In our implementation, we force the flow to be firstly pushed along the blocking flow to make sure we can decrease the total excess of the graph at least one. Besides, we set an upper bound for the iteration number of \texttt{price refinement}, which otherwise would dominate the overall theoretical bound and make it $\sqrt{n}$ folds worse.

\begin{table*}[t]
\begin{threeparttable}
\centering
\caption{Efficiency Comparison on KITTI-Car Datasets}
\begin{tabular}{lllllllll}
\hline
Datasets
      & \multicolumn{4}{c}{KITTI(DPM)}                                                                               & \multicolumn{4}{c}{KITTI(reglets)}                                                                            \\\hline
     
& \multicolumn{1}{l}{seq00} & \multicolumn{1}{l}{seq10} & \multicolumn{1}{l}{seq11} & \multicolumn{1}{l}{seq14} & \multicolumn{1}{l}{seq00} & \multicolumn{1}{l}{seq10} & \multicolumn{1}{l}{seq11} & \multicolumn{1}{l}{seq14} \\\hline

\multicolumn{9}{l}{(a) affinity model from \cite{ssp4mot}}                                                                                                                                                            \\
SSP   & 18.4(72)                & 142.8(76)                & 68.0(74)                 & 120.8(173)                & 6.2(57)                  & 9.1(76)                  & 9.4(81)                  & 31.8(172)                 \\
dSSP  & 3.4(13)                  & 12.7(7)                  & 6.4(7)                   & 12.0(17)                  & 1.4(13)                  & 1.8(15)                   & 1.5(13)                   & 6.4(34)                   \\
MCF   & 11.8(46)                 & 88.4(47)                 & 42.6(46)                 & 41.1(59)                 & 4.3(39)                  & 7.2(60)                  & 5.7(50)                   & 10.3(55)                  \\
\textbf{CINDA} & \textbf{0.26(1)}         & \textbf{1.90(1)}          & \textbf{0.92(1)}          & \textbf{0.70(1)}          & \textbf{0.11(1)}         & \textbf{0.12(1)}          & \textbf{0.12(1)}          & \textbf{0.19(1)}          \\\hline

\multicolumn{9}{l}{(b) affinity model from \cite{followme}}                                                                                                                                                          \\
SSP   & 20.9(9)                & 266.9(342)                & 77.9(97)                 & 96.1(356)                 & 9.4(72)                 & 12.8(111)                 & 18.1(91)                 & 54.7(521)                 \\
dSSP  & 3.2(1)                  & 34.4(44)                  & 9.6(12)                   & 12.0(44)                  & 3.8(29)                  & 3.1(27)                   & 5.9(30)                   & 15.3(146)                 \\
MCF   & 13.6(6)                 & 98.6(126)                 & 45.1(56)                 & 39.8(147)                 & 5.0(39)                  & 6.0(52)                   & 6.3(31)                   & 8.9(85)                   \\
\textbf{CINDA} & \textbf{2.21(1)}         & \textbf{0.78(1)}          & \textbf{0.80(1)}          & \textbf{0.27(1)}          & \textbf{0.13(1)}         & \textbf{0.12(1)}          & \textbf{0.20(1)}          & \textbf{0.11(1)}          \\\hline

\multicolumn{9}{l}{(c) affinity model from \cite{bydpixels}}                                                                                                                                                            \\
SSP   & 1.2h(8.6k)              & 10.0h(7.6k)              & 4.5h(8.0k)               & 5.2h(11.6k)               & 437.6(1.9k)              & 235.8(1.0k)               & 246.8(968)               & 605.9(1.8k)               \\
dSSP  & 917.9(1.8k)              & 3.7h(2.8k)               & 1.0h(1.7k)                & 0.9h(2.1k)                & 198.4(863)              & 224.5(976)               & 242.2(950)               & 649.1(1.9k)               \\
MCF   & 24.6(49)                & 184.3(39)                & 78.7(39)                 & 69.7(43)                 & 5.9(26)                  & 8.5(37)                   & 7.7(30)                   & 12.8(38)                  \\
\textbf{CINDA} & \textbf{0.52(1)}         & \textbf{4.73(1)}          & \textbf{2.03(1)}          & \textbf{1.61(1)}          & \textbf{0.23(1)}         & \textbf{0.23(1)}          & \textbf{0.26(1)}          & \textbf{0.34(1)}        \\\hline 
\end{tabular}
\begin{tablenotes}
      \footnotesize
      \item \textit{* Each item represents the time consumed in seconds (or hours if 'h' is appended) by a specific method under a given experiment setting. Each column represents a video sequence and each row corresponds one method. Bold font indicates the most efficient method. Numbers in the parentheses indicate how many folds the method is slower than the most efficient one.}
\end{tablenotes}
\label{table:effi_KITTI}
\end{threeparttable}
\end{table*}

\begin{figure}[t]
\centering
\includegraphics[width=1\linewidth]{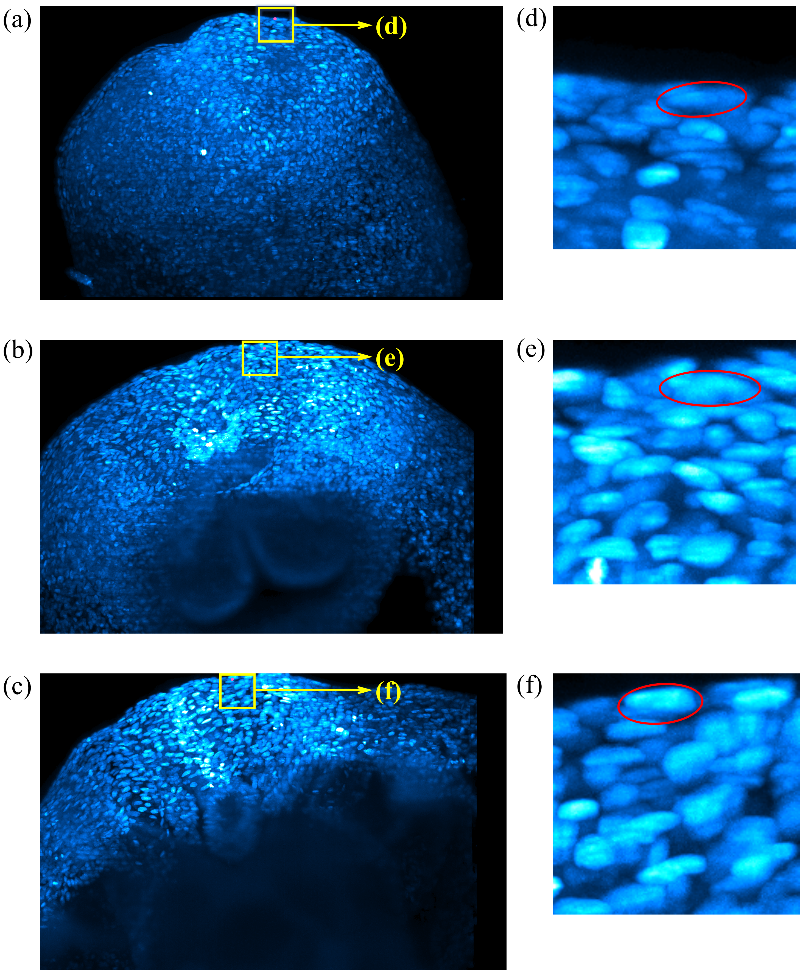}
\caption{
Maximum intensity projections of the mouse embryo data at three different time points ($\sim$20h, $\sim$30h, and $\sim$40h). The zoomed-in segments of the yellow rectangles are on the right side. Red dots in (a-c) indicate one of the tracked cells at these three time points which are also circled on the zoomed-in segments.
}
\label{fig:embryo}
\end{figure}

\section{Experiments}
We use two sets of experiments to study the practical efficiency of our circulation framework. First, we compare the computational time of the proposed circulation-based formulation with previous minimum-cost flow-based formulations on identity inference problems in MOT. Second, we test the efficiency of our formulation as a sub-routine to iteratively approximate the solution of quadratic programming framework in MOT. After efficiency comparisons, we use two applications, car tracking and cell tracking, to illustrate how the efficiency improvement enables more sophisticated but computationally demanding models leading to better tracking performance.

For the efficiency comparisons of identity inference in MOT, we choose three widely used peer algorithms, SSP, dSSP \cite{followme}, and MCF \cite{mcf4mot}. SSP and dSSP are both based on successive shortest paths algorithm, which is known to have superior theoretical complexity over push/relabel-based algorithms on solving the minimum-cost flow problems in MOT when the object number is small \cite{ssp4mot,followme,mcf_kssp}. MCF used cs2 \cite{goldberg1997cs2} for solving their identity inference problems. Written in the C programming language, cs2 is an efficient implementation of the cost-scaling algorithm, which is widely considered as the best solver for generic minimum-cost flow problems. 
SSP were implemented in C++. To make fair comparison, dSSP was re-implemented in C++ from the Python package provided in \cite{followme}. Note that we do not include KSP \cite{mcf_kssp} and the blocking flow-based method 'Block-CS', both of which are listed in Table \ref{table:polyalg4mcf}. KSP is the same as SSP.  For 'Block-CS', it is because the purely blocking flow-based implementation has been widely acknowledged not as practically efficient as cs2 implementation \cite{bkflow_goldberg,goldberg1997cs2}.

After the efficiency comparison, we use two applications in two different scenarios to show the accuracy benefits from our efficient identity inference framework. The first one is car tracking on KITTI-Car dataset \cite{kittiData}, where we incorporate our framework with three affinity models \cite{ssp4mot, followme, bydpixels}. For each affinity model, the identity inference problem is solved either globally by our framework or locally using Hungarian algorithm. The second application is cell tracking on the single-cell level imaging of mouse embryo data \cite{embryo10tb}. Rather than solving the data-association at one step, our framework enables it to iteratively refine the tracking results, which previously is intangible with min-cost flow-based frameworks on large scale data.
\begin{table*}[t]
\centering
\begin{threeparttable}
\caption{Efficiency Comparison on CVPR19 and ETHZ Datasets}
\begin{tabular}{lllllllll}
\hline
\multicolumn{1}{c}{Datasets} & \multicolumn{4}{c}{CVPR19}                                                                    & \multicolumn{2}{c}{ETHZ}                         \\\hline
                           & seq04            & seq06            & seq07            & seq08                                 & seq03            & seq04                          \\\hline

\multicolumn{7}{l}{(a) affinity model from \cite{ssp4mot}}                                                                    \\
SSP                        & 655.6(7)       & 112.4(5)       & 11.3(4)        & 50.6(5)                            & 85.7(85)        & 173.0(179)                     \\
dSSP                       & 0.6h(24)       & 234.9(10)      & 19.8(8)        & 123.9(12)                         & 21.2(21)         & 26.2(27)                      \\
MCF                        & 1.1h(42)       & 799.2(35)        & 104.8(41)        & 358.3(34)                         & 39.4(39)         & 46.1(48)                           \\
\textbf{CINDA}                      & \textbf{93.6(1)} & \textbf{22.9(1)} & \textbf{2.6(1)} & \textbf{10.7(1)}            & \textbf{1.01(1)} & \textbf{0.97(1)}         \\\hline

\multicolumn{7}{l}{(b) affinity model from \cite{followme}}                                                                                      \\
SSP                         & 0.3h(50)        & 168.8(22)       & 18.04(15)                  & 87.7(21)                & 183.9(119)       & 137.3(112)                                \\
dSSP                        & 1.2h(192)         & 520.8(68)         & 32.6(27)                  & 176.2(42)                 & 53.1(34)        & 36.8(30)                              \\
MCF                         & 865.9(40)         & 243.1(32)        & 35.2(29)                    & 135.9(32)                      & 63.1(41)        & 57.8(47)                                \\
\textbf{CINDA}                  & \textbf{21.6(1)} & \textbf{7.6(1)} & \textbf{1.2(1)}          & \textbf{4.2(1)}            & \textbf{1.54(1)} & \textbf{1.23(1)} &              \\ \hline
\end{tabular}
\begin{tablenotes}
      \footnotesize
      \item \textit{* The interpretation of this table is the same as Table \ref{table:effi_KITTI}. The performance of CINDA is consistent with that on KITTI-Car dataset and always the most efficient on CVPR19 and ETHZ datasets.}
\end{tablenotes}
\label{table:effi_others}
\end{threeparttable}
\end{table*}

\subsection{Solving first-order identity inference problem in MOT \label{exp:mcf}}
In this section, we compare the efficiency of our minimum-cost circulation formulation with widely used minimum-cost flow formulations for solving the MAP problem in MOT, where only up to first-order (unary and pairwise) information of detections is considered. We select five public datasets that represent a wide range of real world applications. Three of them are natural image MOT datasets: ETHZ (sequences BAHNHOF and JELMOLI) \cite{ethzData}, KITTI-Car \cite{kittiData}, and MOT CVPR 2019 Challenge (CVPR19) \cite{cvpr19}. The remaining two are the particle tracking dataset, ISBI 2012 Particle Tracking Challenge (PTC) \cite{ptcData} and the 10TB cell tracking dataset, single-cell level mouse embryo data (Embryo) \cite{embryo10tb}. 
For KITTI-Car dataset, we choose four long sequences (seq00, seq10, seq11, seq14) for efficiency comparison; if sequences are too short, usually there is no need to speed up them. For the CVPR19 dataset, we use its test set which includes four videos. Each video contains a crowd of pedestrians. The particle tracking dataset (PTC) provides two different types of simulated data: RECEPTOR and VESICLE, each of which contains 15 videos with different signal to noise ratios and particle densities. We tested on the 'RECEPTOR snr 7' videos with particle density ranging from low to high. The cell tracking dataset (Embryo) contains only one sequence, which is a light-sheet imaging data of mouse embryos that is recorded for 48 hours. The video contains 531 3D frames with total size of around 10 terabytes. The number of cells contained in each frame varies from $\sim$3,000 to $\sim$42,000. Example frames can be found in Fig. \ref{fig:embryo}.

All the experiments are based on the detection results either included in the datasets or provided by the authors of \cite{ssp4mot}, \cite{followme}, \cite{bydpixels}, and \cite{embryo10tb} in their published packages. The KITTI-Car dataset provides detection results from two detectors: DPM \cite{dpm} and reglets \cite{regionlets}. We tested on both of them. The detection results of the ETHZ dataset are also from DPM detector provided by \cite{ssp4mot}. The detection results of CVPR19 dataset were provided by their website. As particle tracking benchmark did not provide detection results, we directly use the ground truth as detections when building the networks. The cell detections on the Embryo data is provided by \cite{embryo10tb} using the method of \cite{tgmm}. The number of detections on the Embryo data is around seven millions. For other datasets, the number of the detections varies from $\sim$7,000 to $\sim$200,000.

For MOT problem with natural image, we use three different affinity models, \cite{ssp4mot},\cite{followme}, and \cite{bydpixels}, to design the arc costs (probabilities in Eq.(10-11)). As \cite{bydpixels} was specifically designed for road scene, it is only applied to KITTI-Car dataset. For particle tracking and cell tracking datasets, we use a probability principled way to design the arc costs. 
For PTC dataset, the probabilities $p_{enter}$ and $p_{exit}$ are learned from the training data, which is defined as the number of trajectories divided by the number of detections. They are the same for all detections, because every particle can suddenly appear or disappear in the filed of view unlike objects in natural images. Since we are using the groundtruth as detection results, we set the linkage cost between the pre-node $o_i$ and post-node $h_i$ of the same object as $c(o_i,h_i)=-(c(s,o_i)+c(h_i,s))$ to make sure the final results will not miss any detection. The transition arc cost between two detections is decided by their distance as we do not have much appearance features to use. We use the training data ('RECEPTOR snr 4') to build an empirical distribution of the distance between two adjacent footprints of a particle. Thus, for the testing data, each distance can be transferred to a p-value $p$ based on this empirical distribution. We set the transition arc cost as $-\log(p)$ directly. 
For Embryo data, we use the similar principles. The difference is that on this dataset we do not have training data, so we can only estimate the arc costs from testing data. For $p_{enter}$ and $p_{exit}$, we estimate the number of new born cells (start of a trajectory) and dead cells (end of a trajectory) by calculating the cell number difference between adjacent frames. For example, if the number of detections decreases $n$ in the next frame, we view it as that $n$ trajectories end. If it increases $n$, we view it as $n$ new trajectories start. This can be used to calculate $p_{enter}$ and $p_{exit}$ using the same way on PTC dataset. For transition arc cost, the key is to build the empirical distance distribution. We estimate it from the distance between each detection and its nearest neighbor. We observe that cells move slowly. For one cell, its footprints in two adjacent frames are highly likely to be nearest neighbors to each other. Thus the empirical distribution of the groundtruth distances can be estimated by calculating the distances between detections and their nearest neighbors in adjacent frames. The linkage arc costs between pre-nodes and post-nodes are set using the same way on PTC dataset. Gating is used in both PTC and Embryo datasets. Each detection is linked to its 3 nearest neighbors in next frame as did in \cite{ptcTracking} on PTC dataset. Because we allow 'jumps' in Embryo dataset, each detection has 3 linkages in each of the following two frames. 

We allow the objects to 'jump', in order to tackle the problem of occlusion or missed detection (other than PTC dataset because we are using groundtruth as detection results). For CVPR19 dataset, density of pedestrian is very high and object occlusion or missing frequently happens. To handle these conditions, we allow objects in frame $t$ to be linked to objects in frames ranging from $t+1$ to $t+15$. The total duration of 15 frames is around $0.5s$. For other datasets, where objects are sparse and occlusion and missed detection happen infrequently (especially for the detection results from DPM or reglets detector which tends to contain many false positives), we link the objects in frame $t$ to objects in two following frames, $t+1$ and $t+2$, in the resultant networks. 

We performed experiments with 40 different settings of detection results, affinity models, and datasets. Summary of the detected objects, frames and the vertices and arcs of the resultant networks can be found in the appendix. Table \ref{table:effi_KITTI}, \ref{table:effi_others}, and \ref{table:PTC_embryo} show the computation time for all experiments. The results were obtained on a workstation with 2.40GHz Xeon(R) E5-2630 CPU and only a single core is used. The other specs of our comparison environment can be found in the appendix. 

\begin{table}[h]
\centering
\begin{threeparttable}
\small
\caption{Efficiency Comparison on PTC and Embryo Data} 
\begin{tabular}{lcccc}
\hline
Method    & SSP               & dSSP              & MCF       & \textbf{CINDA}    \\\hline
PTC-High  & 0.4h(1.4k)        & 1.2h(4.5k)        & 71.5(71)  &\textbf{1.00(1)} \\
PTC-Mid   & 379.2(1.4k)       & 0.3h(4.3k)       & 20.2(72)   & \textbf{0.28(1)} \\
PTC-Low   & 10.7(214)         & 40.4(808)        & 3.0(60)    &\textbf{0.05(1)} \\
Embryo & $>$1d         & $>$1d         & 5.8h(52) &\textbf{398.45(1)}  \\\hline
\end{tabular}
\begin{tablenotes}
      \footnotesize
      \item \textit{* Each row corresponds to one video sequence. SSP and dSSP both fail to solve the identity inference on Embryo data in one day.}
\end{tablenotes}
\label{table:PTC_embryo}
\end{threeparttable}

\end{table}

Each item in the tables represents the time consumed by a specific method under a given experiment setting. More specifically, Table \ref{table:effi_KITTI} shows the results on KITTI dataset. The rows in it are divided three major sections (section (a-c)). Each section represents the efficiency of identity inference incorporated with one affinity model. The columns are divided into two major sections (KITTI(DPM) and KITTI(reglets)), each of which contains the efficiency comparison based on a specific object detector (DPM or reglets). Each row corresponds the performance of a specific method and each column represents the time consuming on a specific video sequence. 
Table \ref{table:effi_others} shows the performance comparison on CVPR19 and ETHZ datasets, whose interpretation is the same as Table \ref{table:effi_KITTI}.  Table \ref{table:PTC_embryo} shows the results on the particle tracking dataset (PTC) and cell tracking dataset (Embryo). Each column here represents a method, while each row corresponds to a video sequence. The particle densities of the three video sequences on PTC dataset are marked out in the names of the video sequences. 
In Table \ref{table:effi_KITTI}, \ref{table:effi_others}, and \ref{table:PTC_embryo}, the time is all reported in seconds, unless when a letter 'h' meaning hour or 'd' meaning day is appended. The numbers in the brackets indicate how many folds the method is slower than the most efficient one. Bold font indicates the most efficient method. 

By formulating the MAP inference problem as a minimum-cost circulation problem, we get the best efficiency in all 40 experiments. For all the experiments, our method takes at most several minutes to finish (mostly seconds), while peer methods need up to several hours. Averaging on all 40 experiments, our circulation-based framework is 53 times faster than MCF. Our method is also much faster than methods based on successive shortest path: averagely we are 1,192 times faster than SSP and 592 times faster than dSSP ('$>$1d' in table \ref{table:PTC_embryo} is viewed as one day).

As an improved version of SSP, dSSP usually performs better than SSP. Interestingly, when the object number is large (\textit{e.g.}, CVPR19 dataset), its performance drops quickly and is even worse than SSP. 
This is because dSSP keeps records of all the nodes of the sequentially instantiated paths and inserts them to the heap used in Dijkstra's algorithm before instantiating new paths. This operation will make the heap size huge when we have a large number of paths. It can be seen from the table that the running times of SSP and dSSP vary greatly with different graph settings. It may take seconds in one graph setting but take several hours for another. 
Based on the same minimum-cost flow framework, MCF performs relatively stable compared with the successive shortest path-based methods SSP and dSSP. It is likely that the local updating strategy of cost-scaling algorithm cs2 scales better with graph size. However, though using a solver that famous for its efficiency for solving generic minimum-cost flow problem, MCF does not always outperform SSP and dSSP. This is true especially when the number of expected targets is small, because the number of expected targets directly decides the number of iterations of SSP and dSSP. For example, using affinity model \cite{followme} on KITTI dataset, we usually will stop with tens of iterations in SSP or dSSP because we are forced to track a small number of targets. Under such condition, we can see the difference between MCF and SSP or dSSP is small. Sometimes MCF is even slower. If we use affinity model \cite{bydpixels}, which relies more on detections and believes each detection should participate in a trajectory, the number of targets to be identified becomes larger. Consequently, MCF becomes much more efficient. As a cost scaling-based method, we are as stable as MCF under different settings as shown in Table \ref{table:effi_KITTI}, \ref{table:effi_others}, and \ref{table:PTC_embryo}. Because our framework does not need to search the optimal trajectory number, orders of efficiency improvement can always be achieved.
 
\subsection{Solving high-order identity inference model in MOT}
High-order relationships between detected objects have been incorporated to further improve the accuracy of tracking in MOT problems \cite{chari2015pairwise,fusionHeadQuadratic,lagrangian1, rank1tensor}. This kind of task is commonly formulated as a quadratic programming problem.
However, identity inference based on such formulation is NP-hard. Existing methods approximate the solution with the help of Frank-Wolfe algorithm or Lagrangian relaxation, where minimum-cost flow identity inference solvers are frequently used as a sub-routine and called iteratively. However, since the number of targets is also unknown, the minimum-cost flow-based framework will encounter the same problem as we mentioned before. Under such circumstances, our circulation-based framework is a better alternative.
\begin{table}[h]
\small
\caption{Efficiency Comparison of Solving Quadratic Programming Problem}
\centering
\begin{tabular}{lcccc}
\hline
Method    & SSP               & dSSP              & MCF       & \textbf{CINDA}    \\\hline
PETS S1.L1-2 & 391.9(14)         & 557.0(20)         & 643.1(22) &\textbf{27.5(1)}  \\\hline
\end{tabular}
\label{table:qua}
\end{table}

We tested the efficiency of our framework in solving the identity inference problem formulated by quadratic programming \cite{chari2015pairwise} on the dataset PETS09 S1.L1 \cite{pets09Data}. 
The quadratic objective function is provided by the authors in their software package derived from the sequence 'Time\_13-59/View\_002'. 
After relaxing the integer solution constraints, the Frank-Wolfe algorithm iteratively solves this quadratic programming problem. In each iteration, the problem is reduced to an ILP problem with the same form of Eq.(7-9). Thus, this sub-routine problem can either be solved using minimum-cost flow-based frameworks or our circulation-based framework. The flow network has 5866 vertices and 36688 arcs. The circulation network has 5865 vertices and the same number of arcs. The step size of the Frank-wofle algorithm is $k/(k+2)$, where $k$ is the index of the current iteration. The results are shown in Table \ref{table:qua}. Our circulation-based framework is still much more efficient than minimum-cost flow-based ones: we are 22 times faster than MCF, 14 times than SSP and 20 times than dSSP. Note that for this quadratic problem, efficiency improvement from our framework is not as great as solving directly the first-order identity inference problem, especially compared with SSP and dSSP.
This is a result of the small number of targets in the video as we mentioned in section \ref{exp:mcf}, which benefits successive shortest path-based algorithms for converging with less iterations. In this test video, we have averagely less than 10 detections in each frame and totally only 44 targets appear throughout the 241 frames.

\subsection{Case study: tracking using min-cost circulation framework}
\begin{figure*}[t]
\centering
\includegraphics[width=1\linewidth]{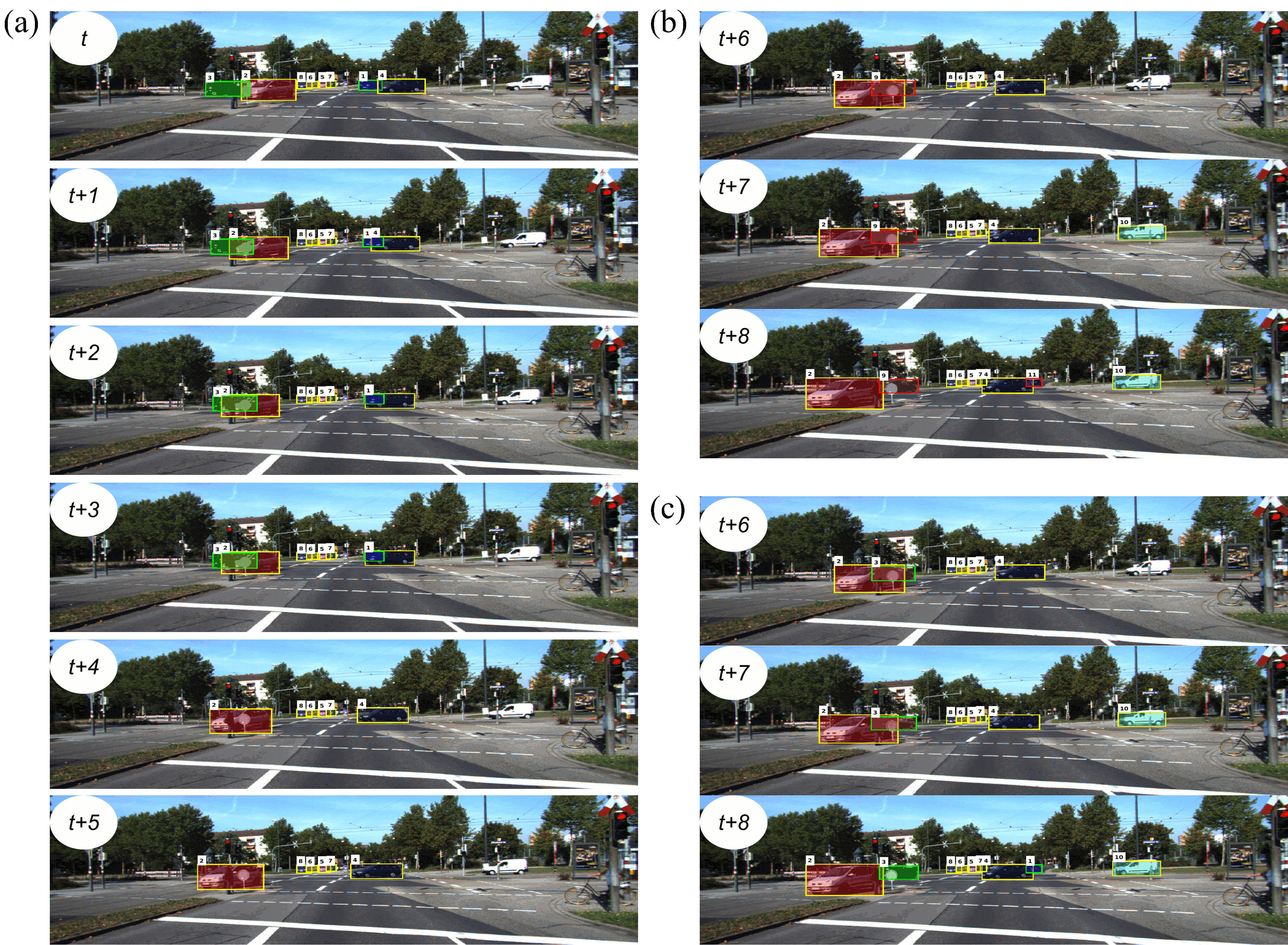}
\caption{
Qualitative results using local optimal (panel (b)) and global optimal (panel (c)) inference models. As shown in panel (a), starting from frame $t+4$, car \#1 and \#3 (with green bounding boxes) were occluded by other cars and thus missed in the following frames. Using local optimal model, ID switches can not be avoided when they re-appear after occlusion. As is shown in panel (b), car \#1 and \#3 are viewed as two new cars (with red bounding boxes). Global inference strategy can retrieve the identity of the missed cars as we use information from previous 5 frames as shown in (c).
}
\label{fig:kitti}
\end{figure*}

We present two case studies using our efficient inference model in two tracking scenarios to demonstrate the consequence of the improved efficiency. One is for car tracking on KITTI-Car dataset. The other is the cell tracking on an Embryo dataset. The first case shows that by accumulating information from the whole video, the tracking accuracy of current state-of-the-arts can be further improved. Our efficient inference model greatly relieves the computational concern of getting global optimal solution. Second, because our framework is efficient, it is computationally feasible to iteratively refine tracking results on large data like Embryo. This is valuable especially on particle or cell tracking problems, where discriminative features between detections are limited. In the second case, improved tracking accuracy is obtained by iteratively refining the tracking results using the features estimated from the tracking results of previous iterations.

\subsubsection{Car tracking on KITTI-Car dataset}
We compare the overall tracking accuracy based on local and global identity inference strategies. Three different network flow-based trackers were used \cite{ssp4mot}, \cite{followme}, and \cite{bydpixels}. 
The experiments were performed on the KITTI-Car training set \cite{kittiData}.
Trackers \cite{ssp4mot} and \cite{followme} both relies on the same MAP formulation described in section 3 for identity inference, but each tracker proposed its own affinity model.
Their detection results both come from the DPM detector \cite{dpm} provided by the KITTI benchmark. Tracker \cite{bydpixels} originally used a greedy framework which linked adjacent frames each time. In \cite{bydpixels}, because they used deep learning-based detector RRC \cite{RRC_detector}, which is more robust on car detection, each detection was believed to participate in an output trajectory. This means the probability $\beta_i$ for detection $x_i$ is approaching to zero. Under such condition, the entering and exiting arc costs become meaningless and can be directly set as zero. The transition arc cost $C_{i,j}$ is set as the negative value of the similarity between $x_i$ and $x_j$.

\begin{table}[]
\centering
\begin{threeparttable}
\caption{Accuracy Comparison on KITTI-Car Dataset}
\begin{tabular}{l|cc||cc||cc}
\hline
              & \multicolumn{2}{c||}{\cite{ssp4mot}} & \multicolumn{2}{c||}{\cite{followme}} & \multicolumn{2}{c}{\cite{bydpixels}} \\ \hline
              & LOC                        & GLB                      & LOC                       & GLB                        & LOC                        & GLB                        \\ \hline
MOTA {[}\%{]} & 38.8                     & \textbf{41.0}            & 46.7                     & \textbf{48.1}             & 81.2                      & \textbf{83.0}             \\
MOTP {[}\%{]} & 78.8            & 78.8                     & 78.6                     & \textbf{79.0}             & 90.1                      & \textbf{90.3}             \\
IDF1 {[}\%{]} & 40.3                     & \textbf{42.6}            & 57.1                     & \textbf{57.3}             & 75.9                      & \textbf{80.4}             \\
FAR           & \textbf{0.06}            & 0.07                     & 0.16                     & \textbf{0.14}             & \textbf{0.23}                      & 0.25             \\
MT {[}\%{]}   & 7.3                       & \textbf{8.1}              & \textbf{15.7}              & 12.4                & 79.8                       & \textbf{81.2}                       \\
ML {[}\%{]}   & 45.9                      & \textbf{44.7}             & 40.1              & \textbf{28.1}                 & 2.2                        & \textbf{2.1}                        \\
IDS           & 284                      & \textbf{276}             & \textbf{138}                      & 153               & 514                       & \textbf{355}              \\\hline
\end{tabular}
\begin{tablenotes}
      \footnotesize
      \item \textit{* 'LOC' means local inference model, which only considers linking between consecutive frames each time. 'GLB' means global inference model, which using the whole video for linking detections.}
\end{tablenotes}
\label{table:accuracy}
\end{threeparttable}
\end{table}

For all these three methods, we tested both local and global optimal inference models. The local model uses greedy linking strategy, which considers two consecutive frames each time. This is widely used in many online tracking scenarios \cite{bydpixels, LSST17}. The linking between two frames is solved by Hungarian algorithm ('LOC' in Table \ref{table:accuracy}). The global model formulates the whole video as a single network flow problem and the solution has global optimality. Note that if the MAP problem statement and the similarity score design are the same, the circulation network and flow network give identical and optimal results. Therefore, we only state the results from our circulation network ('GLB' in Table \ref{table:accuracy}). As in section \ref{exp:mcf}, to tackle the problem of occlusion or missed detection, in global formulation, we allow the objects to 'jump'. For DPM detector-based trackers \cite{ssp4mot,followme}, we allow only one jump because the large amount of redundant detections. For RRC-based tracker \cite{bydpixels}, we link the objects in frame $t$ to objects in frame $t+1$ to $t+5$ in the resultant networks, which is equivalent to 0.5s gap. 
There are also plenty of other local optimal solutions for the MAP inference models (\textit{e.g.} using dynamic programming \cite{ssp4mot}, batch processing \cite{followme}), which has already been shown to degrade the performances compared with global optimal solution in the corresponding papers. Note that the measures we used were provided by \cite{mot16}, which was stricter than those defined originally in \cite{clear2006}.

As shown in Table \ref{table:accuracy}, global optimal solutions are always better than their local optimal counterparts in terms of 'MOTA', the most widely accepted metric \cite{mot16}, and 'IDF1'. More importantly, when the object detectors like RRC miss some detections \cite{bydpixels}, global optimal solution enjoys lower rate of ID switches (IDS) due to the flexible linkage choices. This is a natural advantage of global solvers and is also important for applications like object re-identification. Our global optimal circulation-based solver achieves a throughput of 1525, 1551, and 2930 fps based on these three affinity models respectively. For \cite{ssp4mot} and \cite{followme}, DPM detector has high portion of false positives ($\sim$80\%) and low missing rate. Under such circumstances, ID switches happen infrequently with carefully designed affinity measures. 
It is worth mentioning that \cite{bydpixels} is the top performing tracker on KITTI benchmark, and it still can be improved by simply shifting from the local framework to a global framework.

\subsubsection{Cell tracking on Embryo dataset}
Our efficient identity inference framework makes it tangible to iteratively refine the tracking results even for data as large as the single-cell level imaging of mouse embryo. The computational time of global identity inference on the mouse embryo data is decreased from at least several hours to several minutes. The first benefit of an iterative framework is that the affinity measures become not limited to pairwise features between detections, because we have sets of trajectory hypotheses from previous iterations. This is especially helpful for spot tracking or cell tracking problems, where discriminative pairwise features are not reliable and quite limited. The commonly used pairwise relationships like overlapping ratio or center distance can be biased because of the unknown velocities. The second advantage is that we can better estimate the parameters used in the whole tracking framework. This is important when training data is not available.
Here we propose an iterative framework on the Embryo data to show that these two benefits improved the overall tracking accuracy  dramatically.

\begin{table}[]
\centering
\caption{Accuracy Comparison on Mouse Embryo Data}
\begin{tabular}{c|ccccc}
\hline
             & Iter \#1     & Iter \#2  & Iter \#3  & Iter \#4     & Iter \#5     \\\hline
MOTA{[}\%{]} & 3.3          & 10.0         & 12.0      & 13.5        & \textbf{13.6} \\
IDF1{[}\%{]} & 57.1         & 61.2         & 61.4      & 62.1        & \textbf{62.1} \\
FAR          & 40.7         & 39.0         & 37.6      & 36.8        & \textbf{36.7} \\
MT{[}\%{]}   & 70.0         & 70.0         & 70.0      & 70.0        & 70.0          \\
ML{[}\%{]}   & 9.0          & \textbf{5.0} & 9.0       & 9.0         & 9.0           \\
IDS          & 72           & 18           & 16        & \textbf{15} & \textbf{15}  \\\hline
\end{tabular}
\label{tbl:embryoAccu}
\end{table}

We iteratively refine the affinity designs on Embryo data described in section \ref{exp:mcf} in two directions. The first direction is to recruit more trajectory-related features for affinity measure. 
One feature is the velocity estimated from the set of trajectory hypotheses. At the very beginning, we use distances among detections because we did not known the velocity \textit{in priori}. Now in the iterative framework, for each detection, we can estimate its expected location in the next frame based on its instant velocity. Then distance can be calculated using this predicted location and other detections in the following frames. Besides, for each cell, its instant velocity is calibrated using its four nearest neighbors' velocity (by calculating the median of these five cells). The other feature is the probability of a jump happening $p_{jump}$ in a trajectory hypotheses. In non-iterative framework, this is not accessible. Now we can estimate it as the number of jumps divided by the number of all linkages and thus add a punishment to jumping arcs, whose cost becomes $-\log(p*p_{jump})$. The second direction is to refine the parameters used in the affinity measures. Since we have the set of trajectory hypotheses, we can refine the probability $p_{enter}$ and $p_{exit}$ as we did in PTC dataset, as well as the distribution of the empirical distances.

To quantitatively measure the performance, we manually labelled 100 cell trajectories in 40 time points of the mouse embryo data which contains totally 2040 cell footprints. The results of iterative framework are shown in table \ref{tbl:embryoAccu}. The results of 'Iter \#1' denote the results from the original non-iterative framework. We can see that the iterative framework outperforms non-iterative framework in term of almost all the metrics, especially for the number of ID switches, which decreases from 72 to 15. This is mainly due to that more features can be utilized, which makes the affinity measure more discriminative. Since there are other true positive trajectories that have not been labelled in the field of view, we did not consider the trajectory hypotheses that have no overlap with any groundtruth when calculating the accuracy. The more iterations, the better the performance can be in this case. After five iterations, the results almost converge with little accuracy increase. Note that our algorithm takes only half an hour to finish these five iterations while the other solvers need more than one day.

\section{Discussion and conclusion}
In this paper, we have presented a minimum-cost circulation-based framework for efficient data-association/identity inference in MOT. So far, the minimum-cost flow framework has been most successful for this kind of problem, but it has a high computational burden, which not only hinders the trackers from using a sufficient number of frames to make accurate linking but also holds up the potential of iterative tracking refinements. Our framework leads to the best theoretical complexity bound of solving this identity inference problem and achieves orders of empirical efficiency improvement, which is demonstrated on a wide range of public datasets.

The implementation of our framework combines ideas from the blocking flow-based cost-scaling algorithm \cite{unitCapMCC}, which has the best known theoretical bound $O(\text{min}\{n^{2/3}, m^{1/2}\}m\text{log}(nC))$ for solving minimum-cost circulation problem on unit-capacity graph, and from the most practically efficient implementation of cost-scaling algorithm, cs2 \cite{goldberg1997cs2}, which, however, has an inferior theoretically complexity of $O(nm\log(nC))$ for the same problem.
Our implementation takes advantage of the theoretical efficiency from the blocking flow algorithm and the practical efficiency from cs2. 
While \cite{unitCapMCC} mentioned that the bound of $O(n^{1/2}m\text{log}(nC))$ may also be achievable by blocking flow-based implementation, no rigorous proof was provided. More importantly, the purely blocking flow-based implementation has been widely acknowledged not as practically efficient as cs2 \cite{bkflow_goldberg,goldberg1997cs2}. Indeed, our own experiments show that it is averagely $\sim$3 times slower. On the other hand, compared with cs2, our implementation has similar practical efficiency but has much better theoretical bound with an improvement of $\sqrt{n}$ folds. Note that the bound $O(nm\log(nC))$ was based on the property of the unit-capacity graph and the much looser bound given in \cite{mcf4mot,ssp4mot} was based on the generic graph.

Our efficient framework should benefit both offline and online multi-object tracking applications. For offline scenarios, it makes it possible to build and infer global models based on large scale data sets. For online applications, it allows linking with a longer history of frames and obtains better results while still achieving the real-time speed. In addition, our work will also encourage the development and application of more accurate tracking models that are otherwise limited to smaller data due to the higher computational cost. One example is a model that accumulates evidence from more frames and considers higher order interactions. Another example is a model that iteratively refines the affinity scores based on the tracking results obtained in previous iterations.

In the future, we expect the efficiency of solving the identity inference problem can be further improved by incorporating a good initialization for the minimum-cost circulation problem. This initialization can be obtained by either using greedy linking strategies that only optimize linkages among detections in two consecutive frames or by taking a more extreme approach that simply links each detection to its nearest neighbor in the next frame. If the initialization is good enough to cover most of the correct linking results, we can simply fine-tune the network until reaching the optimality. This procedure should be more efficient than solving the circulation problem from scratch. We believe that efficiency improvement will free up new opportunities. It would be interesting to see how this more efficient framework can push forward the whole MOT field and beyond. We have shown in this report that a better affinity model can be built when it is affordable to perform multiple iterations of tracking. As object detection is an indispensable component for MOT problem and an important task for its own sake, we expect that the use of iterative tracking results could further improve both detection and tracking results.


%

\appendices
\section{Details about experiments}
\subsection{Implementation details}
SSP and dSSP are both implemented in C++. The key step for these two methods is the implementation of the Dijkstra's algorithm for shortest path searching. We use the data structure of self-balanced binary search tree to implement the Dijkstra's algorithm which uses $O(1)$ time for popping top element and $O(\log(n))$ for pushing a new value.



All comparisons were conducted on Ubuntu 16.04 LTS with the code compiled by g++ v5.4.0. The CPU is a 2.40GHz Xeon(R) CPU E5-2630, but only a single core is used. The RAM size is 128GB and the memory speed is 2133MHz.


\begin{table*}[]
\centering
\caption{Details of KITTI-Car dataset}
\begin{tabular}{lllllllll}
\hline
Datasets     & \multicolumn{4}{c}{KITTI(DPM)}    & \multicolumn{4}{c}{KITTI(reglets)} \\\hline
             & seq00  & seq10  & seq11  & seq14  & seq00  & seq10  & seq11  & seq14   \\
\#frames     & 465    & 1176   & 774    & 850    & 465    & 1176   & 774    & 850     \\
\#detections & 51100  & 181132 & 104748 & 96974  & 19885  & 22189  & 24524  & 35198   \\\hline
\multicolumn{9}{l}{(a) graph design from \cite{ssp4mot}}                             \\
\#vertices   & 102201 & 362265 & 209497 & 193949 & 39771  & 44379  & 49049  & 70397   \\
\#arcs       & 171135 & 608881 & 349869 & 325256 & 71730  & 78273  & 86864  & 123008  \\\hline
\multicolumn{9}{l}{(b) graph design from \cite{followme}}                            \\
\#vertices   & 102201 & 362265 & 209497 & 193949 & 39771  & 44379  & 49049  & 70397   \\
\#arcs       & 173242 & 609576 & 352140 & 328352 & 71977  & 78524  & 87048  & 122875  \\\hline
\multicolumn{9}{l}{(c) graph design from \cite{bydpixels}}                             \\
\#vertices   & 102201 & 362265 & 209497 & 193949 & 39771  & 44379  & 49049  & 70397   \\
\#arcs       & 172317 & 607035 & 350276 & 326635 & 71592  & 78477  & 86830  & 122763 \\\hline
\end{tabular}
\label{tb:graphDesign1}
\end{table*}

\begin{table}[]
\centering
\scriptsize
\caption{Details of CVPR19 and EHTZ datasets}
\begin{tabular}{llllllllll}
\hline
Datasets     & \multicolumn{4}{c}{CVPR19}        & \multicolumn{2}{c}{ETHZ} \\\hline
             & seq04   & seq06  & seq07 & seq08  & seq03       & seq04          \\
\#frames     & 2080    & 1008   & 585   & 806    & 1000        & 936         \\
\#detections & 208000  & 70189  & 20220 & 43444  & 101180      & 94054             \\\hline
\multicolumn{7}{l}{(a) graph design from \cite{ssp4mot}}                    \\
\#vertices   & 416001  & 140379 & 40441 & 86889  & 202361      & 188109      \\
\#arcs       & 9522469 & 3343504 & 759961 & 1866996 & 358546      & 365461         \\\hline
\multicolumn{7}{l}{(b) graph design from \cite{followme}}                  \\
\#vertices    & 416001  & 140379 & 40441 & 86889  & 202361      & 188109         \\
\#arcs       & 9712807 & 3703396 & 1001361 & 2076685 & 411170      & 368080            \\\hline
\end{tabular}
\label{tb:graphDesign2}
\end{table}

\begin{table}[]
\centering
\caption{Details of PTC and Embryo datasets}
\begin{tabular}{lllll}
\hline
         & \#frames & \#detections & \#vertices & \#arcs   \\ \hline
PTC-High & 101      & 77352        & 154705     & 462213   \\
PTC-Mid  & 101      & 39215        & 78431      & 234438   \\
PTC-Low  & 101      & 7438         & 14877      & 44448    \\
Embryo   & 531      & 6750628      & 13501257   & 60378108 \\ \hline
\end{tabular}
\label{tb:graphDesign3}
\end{table}

\subsection{Specs of the videos and graphs}
The details of the datasets we used in the experiments can be found in Table \ref{tb:graphDesign1}, \ref{tb:graphDesign2}, and \ref{tb:graphDesign3}. For each dataset we show in the table the number of frames and the number of detections in each video. The out-coming graph sizes with respect to different affinity models are also listed. Notice that the the arc numbers can be different with different affinity models.
The numbers listed in the tables correspond to the graphs used in minimum-cost circulation-based frameworks. For any graph used in minimum-cost flow-based framework, the number of vertex should be subtracted by one compared with its corresponding graph used in the circulation-based framework (for more details, see section 4).



\ifCLASSOPTIONcaptionsoff
  \newpage
\fi


\bibliographystyle{IEEEtran}
\bibliography{mcc}

\end{document}